\newtheorem{theorem}{Theorem}
\newtheorem{lemma}{Lemma}
\newtheorem{corollary}{Corollary}
\theoremstyle{definition}
\newtheorem{assumption}{Assumption}
\newtheorem{remark}{Remark}
\title{Forced Exploration in Bandit Problems}
\author{
    Han Qi,
    Fei Guo,
    Li Zhu
    \thanks{Li Zhu is the corresponding author.}
}
\begin{document}

\maketitle

\begin{abstract}
The multi-armed bandit(MAB)  is a classical sequential decision problem. Most work requires assumptions about the reward distribution (e.g.,  bounded), while practitioners may have difficulty obtaining information about these distributions to design models for their problems, especially in non-stationary MAB problems. This paper aims to  design a multi-armed bandit algorithm that can be implemented without using information about the reward distribution  while still achieving substantial regret upper bounds. To this end, we propose a novel algorithm alternating between greedy rule and forced exploration. Our method can be applied to Gaussian, Bernoulli and other subgaussian distributions, and its implementation does not require additional information. We employ a unified analysis method for different forced exploration strategies and provide problem-dependent regret upper bounds for stationary and piecewise-stationary settings. Furthermore, we compare our algorithm with popular bandit algorithms on different reward distributions.
\end{abstract}

\section{Introduction}

The multi-armed bandit (MAB) is a classical reinforcement learning problem. It simulates the process of pulling different arms of a slot machine, where each arm has an unknown reward distribution and only the selected arm's reward is observed. The learner aims to find the optimal policy that maximizes the cumulative reward. To achieve better long-term rewards, the learner must balance exploration and exploitation. 

MAB has been widely used in many sequential decision tasks, such as online recommendation systems \cite{li2011unbiased,li2016collaborative}, online advertisement campaign \cite{schwartz2017customer} and  diagnosis and treatment experiment \cite{vermorel2005multi}. Most of the existing multi-armed bandit algorithms are based on Upper Confidence Bound (UCB) \cite{auer2002finite} and Thompson Sampling (TS) \cite{thompson1933likelihood}, which often rely on assumptions about the reward distribution. For example, some studies assume that the reward distribution follows a sub-Gaussian distribution, and the algorithm implementation requires knowledge of the variance parameter. Additionally, some works assume that the reward distribution is bounded, and the algorithm design necessitates specific upper-bound information.

In the standard MAB model, the reward distribution remains constant. However,  in real-world scenarios,  the distribution of rewards may vary over time. In the context of clinical trials, the target disease may undergo mutations, causing the initially optimal treatment to potentially become less effective compared to another candidate \cite{gorre2001clinical}. Similarly, in online recommendation systems, user preferences are prone to evolve \cite{wu2018learning}, rendering collected data progressively outdated. During the past ten years, several works have been conducted on non-stationary multi-armed bandit problems. These approaches can be broadly classified into two categories: one category involves detecting changes in the reward distribution using change-point detection algorithms \cite{liu2018change,cao2019nearly,auer2019adaptively,chen2019new,besson2022efficient}. In contrast, the other category focuses on mitigating the impact of past observations in a passive manner \cite{garivier2011upper,raj2017taming,trovo2020sliding}.
Among them, \citet{auer2019adaptively,chen2019new,besson2022efficient}  can derive regret bounds without knowing the number of changes. However, as in the stationary settings, these algorithms also require prior information on the reward distributions for their implementation.

Recently, the non-parametric multi-armed bandit algorithm has garnered considerable attention \cite{lattimore2017scale,kveton2019perturbed,kveton2019garbage,riou2020bandit,baudry2021limited,liu2022extended}. The implementation of these algorithms does not require strong parametric assumptions on the reward distributions, that is, a single implementation can be applied to several different reward distributions. Although the implementation of some mentioned algorithms \cite{kveton2019garbage,kveton2019garbage,riou2020bandit} does not need to know the information of reward distribution in advance, they require that the reward distribution is bounded. Therefore, they cannot be applied to a wider unbounded distribution. \citet{wang2020residual} propose a  perturbation based exploration method called \textit{ReBoot} which has been analyzed only for Gaussian distributions. 
\citet{chan2020multi} propose SSMC by comparing
the subsample means of the leading arm with the sample means of its competitors for one-parameter exponential families distributions. However, as with the existing subsampling algorithms \cite{baransi2014sub}, they have to store
the entire history of rewards for all the arms.
\citet{baudry2021limited} propose LB-SDA and LB-SDA-LM with a deterministic subsampling rule for one-parameter exponential families distributions. LB-SDA-LM is introduced to address the issue of high storage space and computational complexity. By using a sliding window, LB-SDA can be applied to non-stationary settings. While it cannot be applied to Gaussian distributions with unknown variance and the arms must come from the same one-parameter exponential family. \citet{lattimore2017scale} propose an algorithm that can be applied to various common distributions but requires the learner to know the specific kurtosis.
\citet{liu2022extended} propose the extended robust UCB policies without knowing the knowledge of an upper bound on specific moments of reward distributions. However, their algorithm needs a specific moment control coefficient as input. 

Table \ref{table:1} lists the required assumptions and preliminary information for some mentioned algorithms. Our algorithm is applicable to sub-Gaussian distributions. This assumption covers common distributions such as Gaussian, bounded, and Bernoulli distributions. The implementation of our algorithm does not require knowledge of the variance parameter for sub-Gaussian distributions.

\noindent {\bfseries{Contributions }} In this paper, we propose a  bandit algorithm that  achieves respectable upper bounds on regret without using the parameters of  distribution model  for implementation. Our method is simple and easy-to-implement with the core idea of forcing exploration. Each time step forces an arm to be explored or uses greedy rule to select an arm. Specifically, our algorithm takes a non-decreasing sequence $\{f(r)\}$ as input. This input sequence controls how each arm is forced to be pulled. 



In the stationary settings, we provide problem-dependent regret upper bounds. This regret upper bound is related to the input sequence, i.e., different input sequence will lead to different upper bounds. For example, if $\{f(r)\}$ is taken as an exponential sequence, our algorithm can guarantee a problem-dependent asymptotic upper bound $O((\log T)^2)$ on the expected number of pulls of the suboptimal arm. 

In the piecewise-stationary settings, we use a sliding window $\tau$  along with a reset strategy to adapt to changes in the reward distribution. The reset strategy is realized by periodically resetting the sequence $\{f(r)\}$. This ensures that the algorithm maintains its exploration capability after $\tau$ time steps. 
Furthermore,  we show that our algorithm  has the ability to achieve a  problem-dependent asymptotic regret bound of order $\tilde{O}(\sqrt{TB_T})$ if the number of breakpoints $B_T$ is a constant independent of $T$. This asymptotic regret bound matches the lower bounds in finite time horizon
\cite{garivier2011upper}, up to logarithmic factors.


\begin{table*}[!htp]
	\caption{Assumptions and implementation parameters
    }
	\centering
	\begin{tabular}{|c|c|c|c}
		\hline
		Algorithms& Assumption		&  Known  \\ \hline
		UCB,TS& Support in $[a,b]$ or $\sigma$-subgaussian & $a,b$ or $\sigma$ \\
        \hline
        Robust-UCB&  $\mathbb{E}[|X-\mu|^{1+\epsilon}]<u,0<\epsilon\leq 1$ & $\epsilon,u$ \\
        \hline
        ReBoot\cite{wang2020residual}& Gaussian & -\\
        \hline
        SSMC,LB-SDA & one-parameter exponential families& - \\
        \hline
        \cite{lattimore2017scale}& Bounded kurtosis $\mathbb{E}[(\frac{X-\mu}{\sigma})^4]< \kappa$& $ \kappa$\\
        \hline
        \cite{liu2022extended}& - & moment control coefficient\\
        \hline
        Our method & subgaussian & -\\
        \hline

	\end{tabular}
	\label{table:1}
\end{table*}
\section{Problem Formulation}

{\bfseries{Stationary environments}} 
Let's consider a multi-armed bandit problem with finite time horizon $T$ and a set of arms $\mathcal{A}:= \{1,...,K\}$. At time step $t$, the learner must choose an arm $i_t \in \mathcal{A}$ and receive the corresponding reward $X_t(i_t)$. The reward comes from a different distribution unknown to the learner and the expectation of $X_t(i)$ is denoted as $\mu(i) = \mathbb{E}[X_t(i)].$ Let $i^{*}$ denote the expected reward of the optimal arm, i.e. ,$\mu(i^{*})=\max_{i \in \{ 1,...,K\}}\mu(i)$. The gap between the expectation of the optimal arm and the suboptimal arm is denoted as $\Delta(i)=\mu(i^{*})-\mu(i)$.

The history $h_t$ is defined as the sequence of actions and rewards from the previous $t-1$ time steps. A policy, denoted as $\pi$, is a function $\pi(h_t) = i_t$ that selects arm $i_t$ to play at time step $t$ based on the history $h_t$. The performance of a policy $\pi$ is measured in terms of cumulative expected regret:
\begin{equation}
    \label{regret}
    R^{\pi}_T=\mathbb{E}\left[\sum_{t=1}^{T} \mu(i^{*})-\mu(i_t) \right].
\end{equation}
where $\mathbb{E}[\cdot]$ is the expectation with respect to randomness of $\pi$. Let $k_T(i)=\sum_{t=1}^{T}\mathds{1}\{ i_t=i,\mu(i)\neq \mu(i^{*}) \}$, the regret can be denoted as 
\begin{equation}
    R^{\pi}_T=\sum_{i=1}^{K}\Delta(i)\mathbb{E}[k_T(i)]
\end{equation}
In later section, we provide a regret upper bound for our method by analyzing the upper bound on $\mathbb{E}[k_T(i)]$.

\noindent {\bfseries{Piecewise-stationary Environment}}
The piecewise-stationary bandit problem has been extensively studied. Piecewise-stationary bandits pose a more challenging problem as the learner needs to balance exploration and exploitation within each stationary phase and during the changes between different phases. In this setting, the reward distributions remain constant for a certain period and change at unknown time steps, called \textit{breakpoints}. The number of breakpoints is denoted as $B_T=\sum_{t=1}^{T-1} \mathds{1}\{ \exists i \in \mathcal{A}:\mu_t(i) \neq \mu_{t+1}(i) \}$. The optimal arm may vary over time and is denoted by $\mu_t(i^{*}_t)=\max_{i \in \{1,..,K \}}\mu_t(i) $. 
The performance of a policy is measured by
\begin{equation}
    R^{\pi}_T=\mathbb{E}\left[\sum_{t=1}^{T} \mu_t(i_t^{*})-\mu_t(i_t) \right].
\end{equation}
As in the stationary environment, the regret can be analyzed by bound the expectation of the number of pulls of suboptimal arm $i$ up to the time step $T$.

\section{Stationary Environments}

\subsection{Forced Exploration} Our algorithm pulls  arms in two ways. The first is based on the greedy rule, which pulls the arm with the largest value of the estimator. The second one is a forced exploration step. Specifically, the algorithm takes a sequence $\{ f(1),f(2),... \}$as  input.
Let $p(i)$ denote  the number of times that  arm $i$ is not pulled. At round $r$, if $p(i) \geq f(r)$, arm $i$ will be forced to pull once and reset $p(i)$ to 0.
If all arms have been pulled at least once at round $r$, let $r=r+1$ and repeat the process in the next round. 

Algorithm 1 shows the pseudocode of our method. We require the input sequence to be non-decreasing to prevent the algorithm from over-exploring. If $f(r)$ is less than a small constant $c$ after $t$ time steps, 

\begin{equation}
    \label{linear_regret}
    R^{\pi}_T > (K-1)\frac{T-t}{c}.
\end{equation}
This shows that the algorithm can only obtain linear regret. Step 3 is the greedy rule. In Step 5, the algorithm pull the arm that has not been pulled more than $f(r)$ times.
To simplify the notation, we use the equivalent notation of $\arg\max_i p(i)$. 

\noindent {\bfseries{Related to Explore-Then-Committee (ETC)}} \cite{garivier2016explore} study ETC policy for two-armed bandit problems with Gaussian rewards. ETC explores until a stopping time $s$. Let \[f(r)=\left\{ \begin{aligned}
    1, r \leq s\\
    0,r > s
\end{aligned} \right. ,\] our method is equivalent to ETC. The essential difference  is that ETC is non-adaptive, i.e., for different problem instances, the set of all exploration
rounds and the choice of arms therein is fixed before the first round.
For some common input sequences, such as $f(r) = \sqrt{T}$  or $f(r) = r$, the exploration schedule of our method is related to the history of pulled arms, so our method is adaptive.

\noindent {\bfseries{Related to $\epsilon_t$-greedy}}
$\epsilon_t$-greedy strategy  tosses a coin with a success probability $\epsilon_t$ at time step $t$ and randomly selects an arm if  success, otherwise the arm with the largest average reward is selected. If the exploration probability satisfies $ \epsilon_t \sim t^{-\frac{1}{3}}$, this greedy strategy achieves regret bound on the order of $O(T^{\frac{2}{3}})$.
Similar to our method, this strategy needs to decide whether to explore at each time step. The difference is that $\epsilon_t$-greedy is non-adaptive since it does not adapt their exploration schedule to the history.

\begin{algorithm}[tb]
\caption{FE}
\label{alg:algorithm}
\textbf{Input}: non-decreasing sequence $\{f(r)\}$, $K$ arms, horizon $T$\\
\textbf{Initialization:} $t=1,r=0,f(0)=0,\forall i \in \{1,...,K\},p(i)=0,flag(i)=0$
\begin{algorithmic}[1] 
\WHILE{$t <T$}
\IF {$\forall i \in \{1,...,K\},p(i)<f(r)$}
\STATE Pull arm $i_t=\arg \max_i \hat{\mu}(i)$.
\ELSE
\STATE Pull arm $i_t= \arg \max_i p(i)$
\ENDIF
\STATE Update the estimate $\hat{\mu}(i_t)$,$p(i_t)=0,flag(i_t)=1$\\
\STATE $p(i) = p(i)+1$ for all unpulled arm $i$
\IF {$\forall i \in \{1,...,K\},flag(i)==1$ }
\STATE $r=r+1$\\
\STATE $\forall i \in \{1,...,K\},flag(i)=0$
\ENDIF
\STATE $t=t+1$
\ENDWHILE
\end{algorithmic}
\end{algorithm}

\subsection{Regret Analysis}
In this section, we bound the expectation of the number of pulls of suboptimal arms. 
The detailed proofs of Theorem  and Corollary  are given in the Appendix. 

Let $h_t(i)$ denotes the number of times  arm $i$ is forced to pull (Step 5) up to time $t$. We give the following theorem and proof sketch.  

\begin{theorem}
    \label{theorem1}
    Assume that the reward distribution is $\sigma$-subgaussian. Let $\{f(r)\}
    $ be a non-decreasing sequence,  
    for any suboptimal arm $i$,
    \begin{equation}
        \mathbb{E}[k_T(i)] \leq  h_T(i) 
        + 2m e^{\frac{1}{m}} \sum_{t=1}^{T} e^{\frac{-h_t(i)}{m}},
    \end{equation}
    where $m=\frac{8\sigma^2}{\Delta(i)^2}$. 
\end{theorem}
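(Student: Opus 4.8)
The plan is to split the pulls of a suboptimal arm $i$ according to which branch of Algorithm~\ref{alg:algorithm} produced them. Pulls from the forced-exploration branch (Step~5) contribute exactly $h_T(i)$ by definition, so, writing $G_T(i)$ for the number of times arm $i$ is pulled by the greedy branch (Step~3), we have $k_T(i)=h_T(i)+G_T(i)$ and it suffices to show $\mathbb{E}[G_T(i)]\le 2m\,e^{1/m}\sum_{t=1}^{T}e^{-h_t(i)/m}$. I take $\hat\mu(j)$ to be the empirical mean of the rewards observed from arm $j$, the natural estimator in the $\sigma$-subgaussian case; I write $N_t(j)$ for the number of pulls of arm $j$ before time step $t$, so that $\hat\mu(j)=\hat\mu_{N_t(j)}(j)$, where $\hat\mu_n(j)$ denotes the average of the first $n$ samples of arm $j$'s reward stream.

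First I would analyze a single greedy step. If $i_t=i$ at a greedy step, then $i\in\arg\max_j\hat\mu(j)$, hence $\hat\mu_{N_t(i)}(i)\ge\hat\mu_{N_t(i^{*})}(i^{*})$; since the midpoint of $\mu(i)$ and $\mu(i^{*})$ equals both $\mu(i)+\tfrac{\Delta(i)}{2}$ and $\mu(i^{*})-\tfrac{\Delta(i)}{2}$, this forces $\hat\mu_{N_t(i)}(i)\ge\mu(i)+\tfrac{\Delta(i)}{2}$ or $\hat\mu_{N_t(i^{*})}(i^{*})\le\mu(i^{*})-\tfrac{\Delta(i)}{2}$. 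The subgaussian tail bound gives, for any fixed $n$, $\Pr\!\big[\hat\mu_n(i)-\mu(i)\ge\tfrac{\Delta(i)}{2}\big]\le e^{-n\Delta(i)^2/(8\sigma^2)}=e^{-n/m}$ and symmetrically $\Pr\!\big[\hat\mu_n(i^{*})-\mu(i^{*})\le-\tfrac{\Delta(i)}{2}\big]\le e^{-n/m}$, with $m=8\sigma^2/\Delta(i)^2$. The structural input I would extract from the round mechanism is that at every greedy step both $N_t(i)\ge h_t(i)$ and $N_t(i^{*})\ge h_t(i)$: the first is immediate, as every forced pull of $i$ is a pull of $i$; the second holds because a round advances only after every arm has been pulled at least once within it, so $i^{*}$ receives at least one pull per completed round, whereas arm $i$ receives at most one \emph{forced} pull per round (using that $f(r)$ is not too small --- the same condition behind the linear-regret behaviour \eqref{linear_regret}), so the current round index simultaneously lower-bounds $N_t(i^{*})$ and upper-bounds $h_t(i)$.

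Combining these and peeling over the possible values $s\ge h_t(i)$ of the (random) sample size of the arm in question,
\[
\Pr\big[i_t=i\text{ at a greedy step}\big]\;\le\;2\sum_{s\ge h_t(i)}e^{-s/m}\;=\;\frac{2\,e^{-h_t(i)/m}}{1-e^{-1/m}}\;\le\;2m\,e^{1/m}\,e^{-h_t(i)/m},
\]
the last step using $1-e^{-x}\ge x e^{-x}$, i.e.\ $e^{1/m}-1\ge 1/m$. Summing over $t=1,\dots,T$ bounds $\mathbb{E}[G_T(i)]$, and adding the $h_T(i)$ forced pulls gives the theorem with the stated constants.

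I expect the crux to be the control of the ``$i^{*}$ underestimated'' term. An unlucky sample of $i^{*}$ --- $\hat\mu_n(i^{*})\le\mu(i^{*})-\tfrac{\Delta(i)}{2}$ for some $n$ --- persists until $i^{*}$ is pulled again and in the meantime can trigger many greedy pulls of $i$, so the per-step union bound must be combined with the fact that forced exploration re-samples $i^{*}$ roughly once per round, keeping $N_t(i^{*})$ in step with $h_t(i)$; turning the informal ``at least once per round / at most one forced pull per round'' statements into the exact inequality $N_t(i^{*})\ge h_t(i)$ needs careful bookkeeping of the round dynamics (and is where $f(r)$ being larger than a small constant is used), since the constant $2m\,e^{1/m}$ leaves essentially no slack. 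The usual care with adaptive sampling is also needed: one should fix each arm's reward stream in advance so that $\hat\mu_n(j)$ is stream-measurable and the subgaussian tail holds uniformly in $n$, and use the inclusion $\{\hat\mu_{N_t(j)}(j)\text{ deviates},\,N_t(j)\ge h_t(i)\}\subseteq\{\exists\,s\ge h_t(i):\hat\mu_s(j)\text{ deviates}\}$ before applying the union bound.
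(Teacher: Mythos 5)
Your proposal is correct and follows essentially the same route as the paper's proof: decompose $k_T(i)$ into forced pulls ($h_T(i)$) plus greedy pulls, reduce a greedy pull of $i$ to the union of the two half-gap deviation events, peel over sample sizes $s\ge h_t(i)$ with the subgaussian tail $e^{-s/m}$, and sum the geometric series via $\frac{e^{-h_t(i)/m}}{1-e^{-1/m}}\le m e^{1/m}e^{-h_t(i)/m}$. If anything, you are more explicit than the paper about the point you flag as the crux --- the paper only justifies $k_t(i)\ge h_t(i)$ for the suboptimal arm and treats the optimal-arm underestimation term by symmetry, whereas you spell out why the round mechanism also keeps $N_t(i^{*})\ge h_t(i)$.
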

\noindent {\bfseries{Proof sketch} }  We bound the number of suboptimal arm's pulls in Step 3 and Step 5 in Algorithm \ref{alg:algorithm} respectively. The number of pulls according to the greedy rule  can be estimated using the properties of subgaussian.  
Summing the regret caused by the greedy rule and the forced exploration (which can be denoted as $h_T(i)$) leads to this Theorem.

Next, we give more specific upper bounds for different input sequences. 

\begin{corollary}[Constant sequence]
    \label{corollary1}
    Let $f(r)=\sqrt{T}$. We have $h_t(i) \in [\frac{t}{\sqrt{T}}-1,\frac{t}{\sqrt{T}}+1]$. Therefore, 
    \begin{equation}
        \mathbb{E}[k_T(i)] \leq \sqrt{T}(1+2m^2e^{\frac{2}{m}})+1.
    \end{equation}
\end{corollary}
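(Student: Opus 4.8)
The plan is to apply Theorem~\ref{theorem1} with $f(r)=\sqrt T$ as a black box and then bound the two resulting terms, $h_T(i)$ and $2me^{1/m}\sum_{t=1}^{T}e^{-h_t(i)/m}$, using the claimed bracket $h_t(i)\in[\tfrac{t}{\sqrt T}-1,\tfrac{t}{\sqrt T}+1]$. So the real work is to establish that bracket from the dynamics of the counter $p(i)$ in Algorithm~\ref{alg:algorithm} under a constant input sequence, and then to do a short geometric-series estimate.

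For the bracket, I would argue as follows. Right after arm $i$ is pulled (in Step~3 or Step~5) we have $p(i)=0$, and arm $i$ can be \emph{forced} only when $p(i)=\max_j p(j)\ge f(r)=\sqrt T$; since $p(i)$ grows by at most one per time step, at least $\sqrt T$ steps elapse between consecutive forced pulls of $i$, giving $h_t(i)\le t/\sqrt T+1$. For the lower direction, the else-branch resets only one counter per step, so no $p(j)$ ever exceeds $\sqrt T+O(1)$: once some counter crosses $\sqrt T$ the else-branch fires every step, and within at most $K-1$ steps the currently-largest counter — eventually $p(i)$ — is the one reset; hence arm $i$ is pulled (and in fact forced, once $p(i)\ge\sqrt T$) at least once every $\sqrt T+O(1)$ steps, which yields $h_t(i)\ge t/\sqrt T-1$. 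This lower bound is the step I expect to be the main obstacle: a greedy pull also resets $p(i)$, so in principle greedy pulls could keep $p(i)$ below $\sqrt T$ and suppress forced pulls; the clean way around this is to note that the concentration estimate inside the proof of Theorem~\ref{theorem1} really only needs a lower bound on the \emph{number of observations} of arm $i$ by time $t$, and every pull — greedy or forced — contributes to that count, so the substitute bound ``observations of $i$ at time $t\ge t/\sqrt T-1$'' holds regardless and is all that is used.

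Given the bracket, the rest is routine. The first term is $h_T(i)\le T/\sqrt T+1=\sqrt T+1$. For the second, $e^{-h_t(i)/m}\le e^{1/m}e^{-t/(m\sqrt T)}$, so writing $a:=1/(m\sqrt T)$,
\[
\sum_{t=1}^{T}e^{-h_t(i)/m}\le e^{1/m}\sum_{t=1}^{\infty}e^{-at}=e^{1/m}\,\frac{e^{-a}}{1-e^{-a}}\le e^{1/m}\,\frac1a=e^{1/m}\,m\sqrt T ,
\]
where the penultimate inequality is $(a+1)e^{-a}\le 1$, i.e.\ $a+1\le e^{a}$, valid for $a\ge0$. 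Therefore $2me^{1/m}\sum_{t=1}^{T}e^{-h_t(i)/m}\le 2m^{2}e^{2/m}\sqrt T$, and adding the two terms gives
\[
\mathbb{E}[k_T(i)]\le(\sqrt T+1)+2m^{2}e^{2/m}\sqrt T=\sqrt T\bigl(1+2m^{2}e^{2/m}\bigr)+1 ,
\]
which is the claim. In summary, the only non-routine ingredient is the counter analysis that pins $h_t(i)$ (equivalently, the observation count of arm $i$) to within $\pm1$ of $t/\sqrt T$; everything downstream is the geometric-series bound above.
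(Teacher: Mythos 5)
Your proof is correct and follows essentially the same route as the paper: apply Theorem~\ref{theorem1}, pin $h_t(i)$ to within $\pm 1$ of $t/\sqrt{T}$, and finish with the geometric-series bound $\sum_{t\geq 1}e^{-t/(m\sqrt T)}\leq m\sqrt T$; your final computation matches the paper's line for line. The one place you go beyond the paper is the lower bound on $h_t(i)$: the paper obtains it from Lemma~\ref{lemma3}, whose justification (``each arm is forced to pull every $f(r)$ time steps'') glosses over exactly the issue you flag, namely that a greedy pull also resets $p(i)$ and can therefore postpone forced pulls indefinitely, so the literal forced-pull count need not satisfy $h_t(i)\geq t/\sqrt T-1$. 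Your patch --- that the concentration step in Theorem~\ref{theorem1} only uses $h_t(i)$ as a deterministic lower bound on the number of observations of the arm, and that every pull, greedy or forced, guarantees the arm is observed at least once every $\sqrt T+K-1$ steps --- is the right reading and is what actually makes the corollary sound, up to the same $K\ll\sqrt T$ slack the paper also absorbs.
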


\begin{corollary}[Linear sequence]
    \label{corollary2}
    Let $f(r)=r$.  We have,
    \begin{equation}
        \mathbb{E}[k_T(i)] \leq \sqrt{2T} +K^2 
    +6m^3e^{\frac{3}{m}}
    \end{equation}
\end{corollary}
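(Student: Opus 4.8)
The plan is to specialise Theorem~\ref{theorem1} to $f(r)=r$; the work then reduces to controlling the number of forced pulls $h_T(i)$ of a fixed suboptimal arm $i$ and the exponential sum $\sum_{t=1}^{T}e^{-h_t(i)/m}$, exactly as in Corollary~\ref{corollary1}. The first quantity produces the leading term, the second the additive constant.

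First I would analyse the round structure induced by $f(r)=r$. Within round $r$ the greedy rule can only be used while every skip-counter satisfies $p(i)<r$; since each greedy step raises the counters of all non-pulled arms by one, after at most $r$ consecutive greedy steps some counter reaches $r$ and a forced pull fires, and after at most $K$ forced pulls every arm has been pulled in the round, so the round ends. This gives a bound of the form $L_r\le r+O(K)$ on the length of round $r$; one must also track that the counters persist between rounds, which can make a round begin with a short cascade of at most $K$ ``overdue'' forced pulls. Summing, the time index at the end of round $R$ is at most $\sum_{r\le R}\bigl(r+O(K)\bigr)=\tfrac{R(R+1)}{2}+O(KR)$, so at least $\sqrt{2t}-O(K)$ rounds are completed by time $t$. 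Conversely, consecutive forced pulls of arm $i$ are at least $f(r)\ge r$ steps apart (its counter must climb from $0$ back to the threshold), which telescopes to $h_T(i)\le\sqrt{2T}+O(K)$. Since arm $i$ is pulled at least once in every round, and — mirroring Corollary~\ref{corollary1}'s treatment — these pulls are forced once the greedy rule has stopped selecting the suboptimal arm $i$, one gets $h_t(i)\ge r-1$ for every $t$ lying in round $r$, i.e.\ $h_t(i)\gtrsim\sqrt{2t}$.

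Plugging into Theorem~\ref{theorem1}, the first term is $h_T(i)\le\sqrt{2T}+O(K)$. For the exponential sum I group time steps by round, using $h_t(i)\ge r-1$ on round $r$ together with $L_r\le r+O(K)$:
\[
\sum_{t=1}^{T}e^{-h_t(i)/m}\;\le\;\sum_{r\ge 1}\bigl(r+O(K)\bigr)e^{-(r-1)/m}
\;\le\;\frac{1}{(1-e^{-1/m})^{2}}+\frac{O(K)}{1-e^{-1/m}}
\;\le\;m^{2}e^{2/m}+O\!\bigl(Km\,e^{1/m}\bigr),
\]
where I used the elementary inequality $1-e^{-1/m}\ge\tfrac1m e^{-1/m}$. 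Multiplying by $2m\,e^{1/m}$ contributes $2m^{3}e^{3/m}+O\!\bigl(Km^{2}e^{2/m}\bigr)$; collecting this with the $O(K)$ remainder from $h_T(i)$ and bounding the constants yields the stated $\mathbb{E}[k_T(i)]\le\sqrt{2T}+K^{2}+6m^{3}e^{3/m}$.

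The main obstacle is the round-length analysis — establishing the linear-in-$r$ bound $L_r\le r+O(K)$ despite the skip-counters carrying over between rounds, and deducing the matching two-sided estimate $h_t(i)\asymp\sqrt{2t}$ on which everything else rests. Once that deterministic skeleton is in place, the remainder is the geometric-series computation above, structurally the same as in the proof of Corollary~\ref{corollary1}; I will not chase the exact constants $K^{2}$ and $6m^{3}e^{3/m}$ here.
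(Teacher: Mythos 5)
Your proposal is correct and follows essentially the same route as the paper: specialize Theorem~\ref{theorem1}, show $h_T(i)\le\sqrt{2T}$ plus an $O(K^2)$ initial-phase term and $h_t(i)\gtrsim\sqrt{t}$ afterwards, and evaluate the exponential sum by grouping time steps into rounds on which $h_t(i)$ is constant — which is exactly the paper's estimate $\sum_n q^{\sqrt n}\le\sum_k(2k+1)q^k=(3q-q^2)/(1-q)^2$. The only cosmetic difference is bookkeeping: the paper absorbs the first $t_1=K^2$ steps wholesale via the variant bound in Equation~\ref{temp3} to get a clean additive $K^2$, whereas you carry $O(K)$ per-round overhead through the sum and absorb it at the end.
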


\begin{corollary}[Exponential sequence]
    \label{corollary3}
    Let $f(r)=a^r(a>1)$. 
    We have,
    \begin{equation}
        \label{exp}
        \begin{aligned}
            \mathbb{E}[k_T(i)] & \leq\frac{\log(T(a-1)+1)}{\log(a)} +(K+1)\frac{\log(K+1)}{\log(a)}\\ &+ 2me^{\frac{1}{m}}\sum_{t=1}^{T} (1+\frac{a-1}{K+1}t)^{-\frac{1}{m\log(a)}} .
            \end{aligned}
    \end{equation}
\end{corollary}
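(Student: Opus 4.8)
The plan is to specialize Theorem~\ref{theorem1} to $f(r)=a^r$ and then supply two purely deterministic estimates on the forced-exploration count: an upper bound on $h_T(i)$, which controls the first term of Theorem~\ref{theorem1}, and a lower bound on $h_t(i)$ valid for every $t\le T$, which controls the summation term; here $m=\frac{8\sigma^2}{\Delta(i)^2}$ as in Theorem~\ref{theorem1}. The key structural fact is that in round $r$ every arm is pulled at least once, and an arm is forced (Step~5) only after its counter $p(\cdot)$ has climbed to $f(r)=a^{r}$; since those counters are reset on every pull and carry over between consecutive rounds by an additive amount at most $K-1$, one has $a^{r}-K\le(\text{length of round }r)\le a^{r}+K$. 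Summing the geometric progression $\sum_r a^{r}$ then turns the index of the current round into a logarithm of $t$.

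Concretely, let $R_t$ be the index of the round in progress at time $t$, so that $h_t(i)\ge R_t$ (arm $i$ is forced once in each completed round, including round $0$, where $f(0)=0$). From $(\text{length of round }r)\le a^{r}+K$ together with the elementary inequality $\sum_{r=0}^{R-1}(a^{r}+K)\le \frac{(K+1)(a^{R}-1)}{a-1}$ (which follows from $1+a+\cdots+a^{R-1}\ge R$), one obtains $a^{R_t}\ge 1+\frac{(a-1)t}{K+1}$, hence
\[
h_t(i)\ \ge\ \frac{\log\!\left(1+\frac{a-1}{K+1}t\right)}{\log a},\qquad\text{so}\qquad e^{-h_t(i)/m}\ \le\ \left(1+\frac{a-1}{K+1}t\right)^{-\frac{1}{m\log a}},
\]
which is exactly the summand in~\eqref{exp}. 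For the first term, the complementary bound $(\text{length of round }r)\ge a^{r}-K$ for rounds with $a^{r}\ge K$ (and $\ge 1$ for the $O(\log_a K)$ short initial rounds) shows that the number of completed rounds at time $T$, and therefore $h_T(i)$, which exceeds it by at most one, obeys an implicit inequality of the form $a^{R}\lesssim (a-1)(T+KR)+O(aK)$; solving it for $R$ and absorbing the contribution of the short initial rounds into the correction yields $h_T(i)\le \frac{\log(T(a-1)+1)}{\log a}+(K+1)\frac{\log(K+1)}{\log a}$. Substituting both estimates into Theorem~\ref{theorem1} gives~\eqref{exp}.

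The step I expect to be the main obstacle is this round-length bookkeeping. Because the greedy rule of Step~3 and the $\arg\max_i p(i)$ rule of Step~5 interleave, the counters $p(\cdot)$ are not reset at round boundaries, and several arms can be over threshold simultaneously (only the one with the largest counter is served first), so turning the informal claim ``round $r$ costs about $a^{r}$ steps'' into the clean two-sided bounds above --- and, in particular, converting the resulting implicit inequality for the number of completed rounds into the closed-form bound on $h_T(i)$ with the precise $(K+1)\frac{\log(K+1)}{\log a}$ slack --- requires careful accounting of the at most $K-1$ carry-over. Once the two deterministic estimates are established, the remainder is a routine geometric-series manipulation and a direct substitution into Theorem~\ref{theorem1}.
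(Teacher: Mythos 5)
Your overall plan is the paper's plan: specialize Theorem~\ref{theorem1}, lower-bound the forced-pull count $h_t(i)$ to control the summation term, upper-bound $h_T(i)$ for the leading term, and the lower bound you derive, $h_t(i)\ge \log\bigl(1+\tfrac{a-1}{K+1}t\bigr)/\log a$ via $\sum_{r}(a^r+K)\le \tfrac{(K+1)(a^{R}-1)}{a-1}$, reproduces exactly the paper's summand (the paper gets the same expression by solving $\sum_{r=\log_a(K+1)}^{n}a^r=t-t_0$, and it shares your imprecision of attributing one \emph{forced} pull per round to every arm, when an arm pulled greedily inside a round is not forced; what is actually needed, and true, is the corresponding lower bound on the total pull count $k_t(i)$).

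The genuine gap is in how you produce the term $(K+1)\tfrac{\log(K+1)}{\log a}$. You try to extract all of it from the upper bound on $h_T(i)$ via a per-round length \emph{lower} bound of $a^r-K$, and you correctly flag that the resulting implicit inequality $a^{R}\lesssim (a-1)(T+KR)$ does not cleanly resolve to the stated closed form. Worse, the premise is not available: a round can terminate in as few as $K$ steps if the greedy arm(s) happen to be served by Step~3 while the remaining counters are already near threshold, so rounds have a guaranteed \emph{upper} length $a^r+K$ but no useful lower length. The paper does not need one. It upper-bounds $h_T(i)$ directly from the fact that two consecutive \emph{forced} pulls of arm $i$ are separated by at least $f(r)$ steps (a forced pull requires $p(i)\ge f(r)$), which gives $h_T(i)\le \max\{n:\sum_{r=f^{-1}(K)}^{n}f(r)\le T-t_0\}\le \tfrac{\log(T(a-1)+1)}{\log a}+\tfrac{\log(K+1)}{\log a}$ --- only a single $\tfrac{\log(K+1)}{\log a}$ of slack. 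The remaining $K\tfrac{\log(K+1)}{\log a}$ comes from a different place entirely: the initial phase of $t_0\le Kf^{-1}(K+1)=K\tfrac{\log(K+1)}{\log a}$ steps during which $f(r)\le K$ and the algorithm round-robins, which is charged wholesale using the shifted form of Theorem~\ref{theorem1} stated in the appendix, $\mathbb{E}[k_T(i)]\le h_T(i)+t_1+2me^{1/m}\sum_{t>t_1}e^{-h_t(i)/m}$ with $t_1=t_0$. Adopting this two-part accounting (separation argument for $h_T(i)$, plus the $t_1$ variant for the initial phase) closes your stated obstacle; the rest of your argument then goes through.
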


\subsubsection{Lower Bounds}
\label{lowerbound}
Many studies have demonstrated the lower bounds of regret for the stationary multi-armed bandit problem \cite{slivkins2019introduction,lai1985asymptotically,bubeck2013bandits}. A commonly used problem-dependent lower bound for bounded rewards or subgaussian rewards with variance parameter $1$  is 
\begin{equation}
    \label{lower}
    \liminf_{T \to \infty }
    \frac{R^{\pi}_T}{\log (T)} \geq \sum_{i:\Delta(i)\neq 0}\frac{C_{\mathcal{I}}}{\Delta(i)},
\end{equation}
where $C_{\mathcal{I}}$ is a problem-dependent constant. 

Define $f^{-1}(r)$ as $\min \{x:f(x)\geq r \} $. Let $t_0$ denote the number of all time steps satisfying $f(r) \leq K$, our method has a simple problem-dependent lower bound: 
\begin{equation}
    \label{eq1}
    \max \{ n:\sum_{r=f^{-1}(K+1)}^{n}f(r) \leq T-t_0 \}.
\end{equation}
Notethat, this lower bound  only  considers forced exploration and does not take into account the regret incurred by the greedy rule. The true regret lower bound is larger than the one computed by Equation \ref{eq1}.

If $f(r)$ is constant sequence ($f(r)=\sqrt{T}$) or linear sequence ($f(r)=r$), this problem-dependent lower bound is $\Omega (\sqrt{T})$. According to Corollary \ref{corollary1} and Corollary \ref{corollary2}, the lower bound of constant sequence and linear sequence is on the same order of the upper bound.

If $f(r)$ is  exponential sequence ($f(r)=a^r$),  this lower bound is $\Omega(\frac{\log (T)}{\log(a)})$.

\subsubsection{Upper Bound of Exponential Sequence }
From the lower bounds of the above three sequences, the regret upper bound of exponential sequence can be expected to reach the lower bound (Equation \ref{lower}). However, it often fails to achieve this goal.
Since there is lack of  knowledge about $\sigma$ and $\Delta(i)$, we can't tune the parameter $a$ to obtain an optimal upper bound in Equation \ref{exp}.  

If there is no information about rewards distributions,  there are two simple ways to set the exponential sequence: 
\begin{itemize}
    \item  $a$ is  a small constant independent of $T$. If the value of $m=\frac{8\sigma^2}{\Delta(i)^2} $ is appropriate such that 
    \begin{equation}
        \label{temp}
        m\log(a) < 1,
    \end{equation}
    we have 
    \[\mathbb{E}[k_T(i)] = O(me^{1/m}\log (T)).\] 
    This upper bound is optimal with respect to the order of $T$. However, in general, we cannot guarantee  the parameter $a$ can satisfy Equation \ref{temp}. 
    \item  $a$  is associated with  $T$, such as $a=e^{\frac{1}{\log (T)}}$. Then, $f(r) = e^{\frac{r}{\log (T)}}$,  we can get the asymptotic regret
    \begin{equation}
        \label{temp1}
        \mathbb{E}[k_T(i)] = O(me^{\frac{1}{m}} (\log(T))^2).
    \end{equation}
    Like the other two sequences, this problem-dependent  asymptotic upper bound also matches the order  of lower bound. 
    Our upper bound is not optimal and there has been work to show that optimality is impossible. For example,  recently \citet{agrawal2021optimal,ashutosh2021bandit} have proved the impossibility of a problem-dependent logarithmic regret for light-tailed distributions without
    further assumptions on the tail parameters.
\end{itemize}

\begin{remark}
\label{remark1}
We use an example to illustrate Corollary \ref{corollary3} and Equation \ref{temp1}.
Assume that the reward distribution follows a Gaussian distribution with variance $1$.  If $T$ is sufficiently large, it holds that $m\log(a)=\frac{m}{\log (T)} < 1 $. If $m >1$, we get 
\begin{equation}
    \mathbb{E}[k_T(i)] \leq (K+2)(\log (T))^2+e (K+1)\frac{16(\log(T))^2}{(\Delta(i))^2}.
\end{equation}
If $m \leq 1$, the above equation holds obviously.
We can derive the following  regret upper bound:
\begin{equation}
    \begin{aligned}
    R^{\pi}_T &\leq 8\sqrt{e}(K+1)\sqrt{T}\log(T) \\
    &+ (K+2)(\log(T))^2\sum_{i=1}^{K}\Delta(i).
    \end{aligned}
\end{equation}
The above regret bound matches the optimal upper bounds $ \tilde{O}(\sqrt{T})$ in stationary multi-armed bandits. The extra terms $\log(T)$ and $K$ are due to the forced exploration.
Note that, this regret upper bound  also holds asymptotically.
\end{remark}

\section{Non-Stationary Environments}
In this section, we consider the piecewise-stationary settings. We employ a method often used in non-stationary bandits problems - sliding windows. One might think that all we need is to add sliding windows to the mean estimator $\hat{\mu}$. However, this does not work in non-stationary situations. Consider an input sequence $f(r)$ that can be incremented to $+\infty$. If $f(r) > T$ holds after some time step $t$, then between the time steps in $[t,T]$, the algorithm will not force to explore any arm but only pulls the arm based on the value of the mean estimator. If the reward distribution changes, the algorithm will suffer from very poor performance. To keep the exploration ability  when the reward distribution changes,  we propose  to periodically reset the exploration sequence.

\subsection{Reset $\{f(r)\}$} 
Define 
\[\hat{\mu}_t(\tau,i) =\frac{1}{N_t(\tau,i)}\sum_{s=t-\tau+1}^{t}X_s(i)\mathds{1}\{i_s=i\},\]
\[  N_t(\tau,i)=\sum_{s=t-\tau+1}^{t}\mathds{1}\{i_s=i\}.
\]
$\hat{\mu}_t(\tau,i)$ denote the sliding window estimator, which using only the $\tau$ last pulls. We reset $r=1$ every $\tau$ time steps  to  make the input sequence re-grow  from $f(1)$. This simple reset strategy ensures that each arm is forced to be explored a certain number of times in each $\tau$ interval.

The size of this reset interval is set to $\tau$ for simplicity and ease of analysis. One can use  intervals of other sizes. For our method, the selection of the reset interval should ensure that each arm can be forced to explore a certain number of times within the interval $[t-\tau+1,t]$. 
Algorithm \ref{alg:algorithm2} shows the pseudocode of our method for non-stationary settings. Compared to Algorithm \ref{alg:algorithm}, only a sliding window is added to the estimator and $\{f(r)\}$ is reset periodically.

		

\begin{algorithm}[tb]
\caption{SW-FE}
\label{alg:algorithm2}
\textbf{Input}: non-decreasing sequence $\{f(r)\}$,sliding window $\tau$, $K$ arms, horizon $T$\\
\textbf{Initialization:} $t=1,r=0,f(0)=0,\forall i \in \{1,...,K\},p(i)=0,flag(i)=0$
\begin{algorithmic}[1] 
\WHILE{$t <T$}
\IF {$\forall i \in \{1,...,K\},p(i)<f(r)$}
\STATE Pull arm $i_t=\arg \max_i \hat{\mu}_t(\tau,i)$.
\ELSE
\STATE Pull arm $i_t= \arg \max_i p(i)$
\ENDIF
\STATE Update the estimate $\hat{\mu}_t(\tau,i_t)$,$p(i_t)=0,flag(i_t)=1$\\
\STATE $p(i) = p(i)+1$ for all unpulled arm $i$
\IF {$\forall i \in \{1,...,K\},flag(i)==1$ }
\STATE $r=r+1$\\
\STATE $\forall i \in \{1,...,K\},flag(i)=0$
    
\ENDIF
\IF { $t$ mod $\tau==0$}
    \STATE $r=1$
\ENDIF
\STATE $t=t+1$
\ENDWHILE
\end{algorithmic}
\end{algorithm}

\subsection{Regret Analysis}

In  non-stationary setting, more regret is incurred compared to the stationary setting. This is the cost that must be paid to adapt to changes in the reward distributions. For our approach, the regret that arises more than the stationary environment comes from two aspects. The first is that due to the use of sliding window, the historical data used to estimate arm expectations are limited to at most $\tau$. The second, which is unique to our method, is that the number of times the suboptimal arm is pulled increases due to the periodically
resetting of the exploration sequence.

Let $h_t(\tau,i)$ denote the number of forced pulls for arm $i$ in the $\tau$ last plays. Since we use the sliding window and a reset strategy, $h_t(\tau,i)$ will first increase and then change within a certain range.
Let $ \Delta_T(i)= \min \{ \Delta_t(i): i\neq i_t^{*},t\leq T\}$, be the minimum difference between the expected reward of the best arm $i_t^{*}$ and the expected reward of arm $i$ in all times $T$ when arm $i$ is not the best arm. 

\begin{theorem}
    \label{theorem2}
    Assume that the reward distribution is $\sigma$-subgaussian. Let  $\{f(r)\}
    $ be an non-decreasing sequence,$\tau$ is the sliding window, for any suboptimal arm $i$,
    \begin{equation}
        \begin{aligned}
        \mathbb{E}[k_T(i)] &\leq \frac{T}{\tau}\left (h_{\tau}(\tau,i)
        + m_T e^{\frac{1}{m_T}} \sum_{t=1}^{\tau} e^{\frac{-h_t(\tau,i)}{m_T}}\right )\\
        &+\frac{T}{\tau}(1+2m_T\log(\tau))+B_T\tau,
        \end{aligned}
    \end{equation}
    where  $m_T=\frac{8\sigma^2}{\Delta_T(i)^2}$. 
\end{theorem}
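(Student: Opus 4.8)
The plan is to reduce the piecewise‑stationary analysis to the stationary bound of Theorem~\ref{theorem1}, applied block by block. The periodic reset ``$r\leftarrow 1$ every $\tau$ steps'' in Algorithm~\ref{alg:algorithm2} cuts the horizon into $\lceil T/\tau\rceil$ consecutive blocks of length $\tau$ (the last possibly shorter), and inside each block the forced‑exploration counter grows from $f(1)$ exactly as in a fresh run of Algorithm~\ref{alg:algorithm}. Call a block \emph{clean} if no breakpoint falls inside it and \emph{corrupted} otherwise; since each breakpoint belongs to at most one block, at most $B_T$ blocks are corrupted. For a corrupted block I bound the number of suboptimal pulls of arm $i$ inside it by the block length $\tau$; summing over all corrupted blocks contributes the $B_T\tau$ term. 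It then remains to control the at most $T/\tau$ clean blocks.

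Inside a clean block the rewards of each arm are i.i.d., so the within‑block dynamics are those of the stationary algorithm run for $\tau$ steps, with two modifications. First, the relevant gap is $\Delta_T(i)$, the smallest gap ever experienced by arm $i$; since $\Delta_T(i)\le\Delta_t(i)$ for the block, replacing $\Delta(i)$ by $\Delta_T(i)$ only enlarges the concentration exponents, which is why $m$ is replaced by $m_T=8\sigma^2/\Delta_T(i)^2$. Second, the estimator is the sliding‑window mean $\hat\mu_s(\tau,i)$, so at local time $t\in[1,\tau]$ inside the block the window $[s-\tau+1,s]$ still reaches $\tau-t$ steps back into the preceding block, whose reward distributions may differ. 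I would split the suboptimal pulls of arm $i$ in a clean block into: (i) forced pulls, at most $h_\tau(\tau,i)$, since within one reset period the forced schedule coincides with the fresh one; (ii) greedy pulls for which the window already carries $\ge h_t(\tau,i)$ in‑block samples of arm $i$ and enough in‑block samples of the current best arm — these are handled exactly as in Theorem~\ref{theorem1}, conditioning on $N_s(\tau,i)\ge h_t(\tau,i)$ and using the $\sigma$‑subgaussian tail bound $\mathbb{P}(\hat\mu_s(\tau,i)\ge\mu(i)+\Delta_T(i)/2)\le e^{-h_t(\tau,i)/m_T}$, which sums to the term $m_T e^{1/m_T}\sum_{t=1}^{\tau}e^{-h_t(\tau,i)/m_T}$; and (iii) the remaining ``early'' greedy pulls, where the portion of the window straddling the previous block and the still‑small sample count of the current best arm prevent the clean concentration argument. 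I would bound (iii) by $1+2m_T\log\tau$ per block, arguing that once arm $i^\ast_t$ has accumulated $\Theta(m_T\log\tau)$ in‑block samples the stale observations carried over the boundary are negligible relative to $\Delta_T(i)$, while before that point one pays at most one unit of regret per step. Summing (i)--(iii) gives the per‑clean‑block bound $h_\tau(\tau,i)+m_T e^{1/m_T}\sum_{t=1}^{\tau}e^{-h_t(\tau,i)/m_T}+1+2m_T\log\tau$, and multiplying by the $\le T/\tau$ clean blocks and adding the $B_T\tau$ corrupted‑block contribution yields the stated inequality.

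The main obstacle is step (iii): bounding the greedy pulls in the early part of a clean block, where the sliding‑window estimate of the best arm is assembled from few in‑block samples and is contaminated by up to $\tau$ observations from the preceding, possibly very different, block. This is the only place where the stationary argument does not transfer verbatim, and one must quantify precisely how fast the in‑block samples ``wash out'' the stale ones and verify that the resulting cost is only the additive $2m_T\log\tau$ per block, rather than a term scaling linearly in $\tau$. A secondary technical point, inherited from Theorem~\ref{theorem1}, is that $h_t(\tau,i)$ is itself history‑dependent — greedy pulls also reset the ``not‑pulled'' counter — so the conditioning in step (ii) must be carried out along the random realization of the forced‑exploration schedule, exactly as in the stationary proof.
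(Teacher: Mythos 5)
Your high-level architecture (forced pulls, a Theorem~\ref{theorem1}-style deviation argument with $m$ replaced by $m_T$, and a $B_T\tau$ charge for breakpoint contamination) is in the right spirit, but step (iii) — which you yourself flag as the main obstacle — is a genuine gap, and it arises because your accounting of the breakpoint contamination is misplaced. You charge $B_T\tau$ to the \emph{blocks containing a breakpoint}, which leaves the early portion of the following ``clean'' block with a window still contaminated by pre-breakpoint data; you then try to absorb this via a ``wash-out'' argument costing $1+2m_T\log\tau$ per block. That argument is not justified as stated: a window of length up to $\tau$ filled with observations from a previous phase whose means differ by more than $\Delta_T(i)$ can keep the greedy rule on the wrong arm for order $\tau$ steps, not order $m_T\log\tau$ steps, and nothing in your sketch quantifies the dilution rate. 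The paper sidesteps this entirely by defining $\mathcal{T}(\tau)=\{t\le T:\forall s\in(t-\tau,t),\ \mu_s(\cdot)=\mu_t(\cdot)\}$, i.e.\ the set of \emph{time steps whose entire window lies in one stationary phase}; each breakpoint excludes at most $\tau$ subsequent time steps from $\mathcal{T}(\tau)$, which is where the $B_T\tau$ term actually comes from, and for every $t\in\mathcal{T}(\tau)$ the window contains only same-phase i.i.d.\ data, so no wash-out lemma is ever needed.

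Relatedly, the term $\frac{T}{\tau}(1+2m_T\log\tau)$ does not have the meaning you assign to it. The $\frac{T}{\tau}\cdot 2m_T\log\tau$ piece is the standard ``few samples in the window'' count: with $A(\tau)=2m_T\log\tau$, the deterministic Lemma~\ref{lemma2} gives $\sum_{t=1}^{T}\mathds{1}\{i_t=i,\ N_t(\tau,i)<A(\tau)\}\le\lceil T/\tau\rceil A(\tau)$, independent of any breakpoint considerations. The residual $\frac{T}{\tau}$ comes from the deviation of arm $i$'s own windowed mean on the event $N_t(\tau,i)>A(\tau)$, via $\sum_{t=1}^{T}\sum_{s=A(\tau)}^{\tau}e^{-s/m_T}\le T\tau\cdot e^{-A(\tau)/m_T}=T/\tau$. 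To repair your proof, replace the corrupted-block bookkeeping by the $\mathcal{T}(\tau)$ restriction, add the small-sample-count term via Lemma~\ref{lemma2}, and then your steps (i) and (ii) go through essentially as you describe (with the per-reset-period bounds $h_T(T,i)\le\frac{T}{\tau}h_\tau(\tau,i)$ and $\sum_{t=1}^{T}e^{-h_t(\tau,i)/m_T}\le\frac{T}{\tau}\sum_{t=1}^{\tau}e^{-h_t(\tau,i)/m_T}$).
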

\noindent {\bfseries{Proof sketch} }  The proof is adapted from the analysis of SW-UCB \cite{garivier2011upper}. The regret comes from three aspects:  greedy rules, forced exploration and  the analysis methods of sliding windows.
The forced exploration incurs regrets with upper bounds $ \frac{T}{\tau}h_{\tau}(\tau,i)$. 
The analysis approach of the sliding window itself has a regret upper bound of $B_T\tau$. The regret caused by greedy rule can be estimated by regret decomposition. 
$\sum_{t=1}^{T} \mathds{1}\{ i_t=i \neq i_t^{*}, N_t(\tau,i) < A(\tau) \}$ can be bounded by $\lceil T/\tau \rceil A(\tau)$, $A(\tau)= 2m_T\log(\tau)$. 
We can decompose the regret in the following way:
\[ 
    \begin{aligned}
    &\{ i_t=i \neq i_t^{*}, N_t(\tau,i) > A(\tau) \}  
    \subset \\ 
    & \{ \mu_t(\tau,i) > \mu_t(i) + \frac{\Delta(i)}{2}, N_t(\tau,i)>A(\tau)\} \cup \\&
     \{\mu_t(\tau,i_t^{*}) \leq \mu_t(i_t^{*}) -\frac{\Delta(i)}{2} \}
    \end{aligned} 
\] 
The analysis methods of these two parts are similar to the stationary settings.
Summing over all leads to this Theorem.

Similar to the stationary setting, we provide the specific bounds for some explore sequence. Our method requires that  each arm can  be forced to explore  within $[t-\tau+1,t]$. Constant sequences cannot be set to the same value as $\sqrt{T}$ in the stationary scenario. This could potentially result in a long time steps without exploring the optimal arm. Since the size of the sliding window is $\tau$, we set the constant sequence as $f(r)=\sqrt{\tau}$.

\begin{corollary}
    \label{corollary4}
    Let $f(r)=\sqrt{\tau}$.
    We have,
    \begin{equation}
        \label{lin1}
        \begin{aligned}
        \mathbb{E}[k_T(i)] &\leq B_T \tau +\frac{T}{\tau}(1 +2m_T\log(\tau) +\sqrt{\tau}m_T^2e^{\frac{2}{m_T}}) \\
        &+ 
        \frac{T}{\tau}\left (1+\sqrt{\tau
        }\right )
        \end{aligned}
    \end{equation}
\end{corollary}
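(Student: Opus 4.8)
The plan is to obtain \eqref{lin1} by specializing Theorem~\ref{theorem2} to the constant sequence $f(r)=\sqrt{\tau}$; the only genuinely new ingredients are a description of the forced-exploration counts $h_t(\tau,i)$ for this sequence and an elementary estimate of the geometric-type sum that appears in Theorem~\ref{theorem2}.

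First I would determine the forced-exploration schedule inside one reset window. Since $\{f(r)\}$ is reset to $f(1)=\sqrt{\tau}$ every $\tau$ steps, within each block of $\tau$ consecutive plays Algorithm~\ref{alg:algorithm2} behaves exactly as Algorithm~\ref{alg:algorithm} run with the constant threshold $\sqrt{\tau}$ over a horizon of $\tau$ steps. This is the setting of Corollary~\ref{corollary1} with $T$ replaced by $\tau$: for a fixed arm $i$, after it is pulled its counter $p(i)$ restarts at $0$ and must climb past $\sqrt{\tau}$ before $i$ becomes eligible for a forced pull, at which point, being (one of) the least-recently-pulled arms, it is selected within $O(1)$ further steps; hence successive forced pulls of $i$ are spaced $\sqrt{\tau}\pm O(1)$ apart. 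This gives $h_t(\tau,i)\in[\,t/\sqrt{\tau}-1,\ t/\sqrt{\tau}+1\,]$ for $t\le\tau$, and in particular $h_{\tau}(\tau,i)\le\sqrt{\tau}+1$ together with $h_t(\tau,i)\ge t/\sqrt{\tau}-1$.

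Next I would bound the sum in Theorem~\ref{theorem2}. Using $h_t(\tau,i)\ge t/\sqrt{\tau}-1$ and $\sum_{t\ge1}e^{-tx}=1/(e^x-1)\le 1/x$ with $x=1/(\sqrt{\tau}\,m_T)$,
\[
\sum_{t=1}^{\tau} e^{-h_t(\tau,i)/m_T}\ \le\ e^{1/m_T}\sum_{t=1}^{\infty} e^{-t/(\sqrt{\tau}\,m_T)}\ \le\ e^{1/m_T}\,\sqrt{\tau}\,m_T .
\]
Substituting $h_{\tau}(\tau,i)\le\sqrt{\tau}+1$ and this estimate into the bound of Theorem~\ref{theorem2} yields
\[
\mathbb{E}[k_T(i)]\ \le\ \frac{T}{\tau}\Big(\sqrt{\tau}+1+m_T^2\sqrt{\tau}\,e^{2/m_T}\Big)+\frac{T}{\tau}\big(1+2m_T\log\tau\big)+B_T\tau ,
\]
and regrouping the terms gives exactly \eqref{lin1}.

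The step I expect to be the main obstacle is the first one: rigorously justifying $h_t(\tau,i)\in[t/\sqrt{\tau}-1,t/\sqrt{\tau}+1]$. One must track the joint dynamics of the counters $p(\cdot)$ and the $flag(\cdot)$ round mechanism to show that, even though greedy steps may repeatedly pull the same arm and temporarily push several counters slightly above $\sqrt{\tau}$, every arm is still forced once every $\approx\sqrt{\tau}$ steps; and one must check that feeding the single-window schedule into Theorem~\ref{theorem2}'s per-block bound is legitimate even though a length-$\tau$ sliding window generally straddles two reset blocks — the boundary mismatch and the breakpoints are already paid for by the $T/\tau$ and $B_T\tau$ terms. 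Everything after that reduces to the geometric-series estimate above.
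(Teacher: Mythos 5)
Your proposal is correct and follows exactly the route the paper intends: the paper omits this proof, stating it is obtained by specializing Theorem~\ref{theorem2} with the bounds $h_t(\tau,i)\in[t/\sqrt{\tau}-1,\,t/\sqrt{\tau}+1]$ from the Corollary~\ref{corollary1} argument (with $T$ replaced by $\tau$), and your geometric-sum estimate and final substitution reproduce the stated bound term for term. The boundary/counter issues you flag in your last paragraph are likewise glossed over in the paper's own treatment (the proof of Theorem~\ref{theorem2} explicitly adopts the per-block analysis and acknowledges it as a rough over-estimate), so they are not a gap relative to the paper's standard.
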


\begin{corollary}
    \label{corollary5}
    Let $f(r)=r$.
    We have,
    \begin{equation}
        \label{con1}
        \begin{aligned}
        \mathbb{E}[k_T(i)] &\leq B_T \tau +\frac{T}{\tau}(1 +2m_T\log(\tau) +3m_T^3e^{\frac{3}{m_T}}) \\
        &+ 
        \frac{T}{\tau}\left (K^2+ \sqrt{2\tau
        }\right )
        \end{aligned}
    \end{equation}
\end{corollary}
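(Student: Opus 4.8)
The plan is to derive Corollary~\ref{corollary5} by instantiating Theorem~\ref{theorem2} with $f(r)=r$ and re-using the stationary computation behind Corollary~\ref{corollary2}. The structural point is that, because SW-FE resets $r\leftarrow 1$ every $\tau$ steps, the forced-exploration schedule produced inside a window of length $\tau$ is exactly the one produced by the stationary Algorithm~\ref{alg:algorithm} run with input $f(r)=r$ over a horizon $\tau$. Hence the two sequence-dependent quantities in Theorem~\ref{theorem2}, namely $h_\tau(\tau,i)$ and $\sum_{t=1}^{\tau}e^{-h_t(\tau,i)/m_T}$, reduce for $t\le\tau$ to the quantities $h_\tau(i)$ and $\sum_{t=1}^{\tau}e^{-h_t(i)/m_T}$ already controlled in the proof of Corollary~\ref{corollary2}, with $T$ replaced by $\tau$ and $m$ replaced by $m_T=8\sigma^2/\Delta_T(i)^2$.

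First I would record the per-window forced-pull bounds under $f(r)=r$. After the first $O(K)$ rounds (in which $f(r)\le K$ and the window behaves like round-robin), each round $r$ costs at most $r+K-1$ time steps --- at most $r$ greedy steps until some arm becomes overdue, then at most $K-1$ forced steps to clear the overdue arms --- and forces each suboptimal arm at most once; summing $\sum_{r=1}^{n}(r+K-1)=\tfrac{n(n+1)}{2}+n(K-1)$ and inverting yields $h_\tau(\tau,i)\le\sqrt{2\tau}+K^2$, exactly as in Corollary~\ref{corollary2}, with $K^2$ absorbing the start-up rounds and the per-round $(K-1)$ overhead. For the exponential sum I would use the matching lower bound $h_t(\tau,i)\ge\sqrt{2t}-O(1)$ and an integral comparison: the substitution $u=\sqrt{2t}$ turns $\int_0^\infty e^{-\sqrt{2t}/m_T}\,dt$ into $\int_0^\infty u\,e^{-u/m_T}\,du=m_T^2$, so after folding the lower-order corrections into constants one gets $\sum_{t=1}^{\tau}e^{-h_t(\tau,i)/m_T}\le 3m_T^2 e^{2/m_T}$, i.e.\ $m_T e^{1/m_T}\sum_{t=1}^{\tau}e^{-h_t(\tau,i)/m_T}\le 3m_T^3 e^{3/m_T}$.

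Finally I would substitute these two bounds into Theorem~\ref{theorem2}: this gives $\mathbb{E}[k_T(i)]\le\frac{T}{\tau}\big(\sqrt{2\tau}+K^2+3m_T^3 e^{3/m_T}\big)+\frac{T}{\tau}\big(1+2m_T\log\tau\big)+B_T\tau$, and regrouping terms reproduces Equation~\ref{con1}. The main obstacle is the combinatorial bookkeeping of the first step --- carefully accounting for the per-round time cost and forced-pull count under $f(r)=r$, especially the initial rounds with $f(r)\le K$ and the up-to-$(K-1)$ closing forced steps that generate the $K^2$ term; once that is settled, the integral estimate is routine, and the sliding-window term $\frac{T}{\tau}(1+2m_T\log\tau)$ and the breakpoint term $B_T\tau$ carry over verbatim from Theorem~\ref{theorem2}.
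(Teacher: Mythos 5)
Your proposal is correct and follows essentially the route the paper intends: the paper omits this proof, stating it is obtained by repeating the computation of Corollary~\ref{corollary2} inside Theorem~\ref{theorem2}, which is exactly what you do — bounding $h_\tau(\tau,i)$ by $\sqrt{2\tau}+K^2$ and the exponential sum by $3m_T^2e^{2/m_T}$ so that $m_Te^{1/m_T}\sum_{t=1}^{\tau}e^{-h_t(\tau,i)/m_T}\le 3m_T^3e^{3/m_T}$. The only cosmetic difference is that you absorb the $K^2$ start-up cost directly into $h_\tau(\tau,i)$, whereas the stationary proof accounts for the initial $t_0=K^2$ round-robin steps as a separate additive term; both give the same bound.
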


\begin{corollary}
    \label{corollary6}
    Let $f(r)=a^r(a>1)$.
    We have,
    \begin{equation}
        \label{exp1}
        \begin{aligned}
        \mathbb{E}[k_T(i)] &\leq B_T \tau +\frac{T}{\tau}m_Te^{\frac{1}{m_T}}\sum_{t=1}^{\tau} \left (1+\frac{a-1}{K+1} t\right )^{-\frac{1}{m_T\log(a)}} \\
        &+ 
        \frac{T}{\tau}\left (1+2m_T\log(\tau)+ (K+2)\frac{\log (\tau+1)}{\log(a)}\right )
        \end{aligned}
    \end{equation}
\end{corollary}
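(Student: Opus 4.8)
The plan is to derive Corollary~\ref{corollary6} from Theorem~\ref{theorem2} by instantiating $f(r)=a^r$ and then re-using the combinatorial estimates on the forced-pull counts that were already established for the stationary exponential schedule in Corollary~\ref{corollary3}, with the horizon $T$ there replaced by the window length $\tau$. What makes this substitution legitimate is one structural observation: since $\{f(r)\}$ is reset to $r=1$ at every multiple of $\tau$ and the sliding window has the same width $\tau$, inside any block $[(j-1)\tau+1,j\tau]$ the forced-exploration schedule behaves like that of Algorithm~\ref{alg:algorithm} run afresh for $\tau$ steps. In particular, for $1\le t\le\tau$ the count $h_t(\tau,i)$ is the number of Step-5 pulls of arm $i$ in the first $t$ steps of such a run, so it is controlled on both sides exactly as $h_t(i)$ is in the analysis behind Corollary~\ref{corollary3}.

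First I would record the two estimates. The upper bound, obtained by counting rounds inside a block and summing the geometric series $\sum_r a^r$, is
\[
h_\tau(\tau,i)\ \le\ \frac{\log\bigl(\tau(a-1)+1\bigr)}{\log a}+(K+1)\frac{\log(K+1)}{\log a}\ \le\ (K+2)\,\frac{\log(\tau+1)}{\log a},
\]
the last inequality holding under mild conditions on $a,K,\tau$ (e.g.\ $K\le\tau$ and $\tau$ not too small). The matching lower bound on the head-of-block count is: for every $1\le t\le\tau$, since each round costs at most $f(r)+K$ time steps and contributes at least one Step-5 pull of arm $i$,
\[
h_t(\tau,i)\ \ge\ \frac{1}{\log a}\,\log\!\Bigl(1+\frac{a-1}{K+1}\,t\Bigr),\qquad\text{so}\qquad e^{-h_t(\tau,i)/m_T}\ \le\ \Bigl(1+\frac{a-1}{K+1}\,t\Bigr)^{-\frac{1}{m_T\log a}} .
\]

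It then remains to feed both estimates into Theorem~\ref{theorem2}: bound $\frac{T}{\tau}h_\tau(\tau,i)$ by $\frac{T}{\tau}(K+2)\frac{\log(\tau+1)}{\log a}$, bound each summand of $m_T e^{1/m_T}\sum_{t=1}^{\tau}e^{-h_t(\tau,i)/m_T}$ by $\bigl(1+\frac{a-1}{K+1}t\bigr)^{-1/(m_T\log a)}$, and regroup the $(K+2)\frac{\log(\tau+1)}{\log a}$ term with the $\frac{T}{\tau}(1+2m_T\log\tau)$ term; this is exactly \eqref{exp1}. The only part that needs genuine care --- the step I expect to be the main obstacle --- is the lower bound $h_t(\tau,i)\ge\frac{1}{\log a}\log(1+\frac{a-1}{K+1}t)$: one must carry the round-counting / geometric-series bookkeeping precisely enough for the constant $\tfrac{a-1}{K+1}$ to come out (the $K+1$ absorbing the per-round overhead of pulling the remaining $K-1$ arms and resetting before a round can advance), and pin down the side conditions under which the clean closed forms above, rather than messier ones, hold. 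Everything else is a mechanical rewrite, and the "one block behaves like one length-$\tau$ stationary run" identification is immediate from the coincidence of the reset period and the window width.
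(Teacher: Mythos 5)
Your proposal is correct and follows essentially the same route as the paper, which omits the proof of Corollary~\ref{corollary6} precisely because it is Theorem~\ref{theorem2} combined with the $h_t$ estimates from the proof of Corollary~\ref{corollary3} with $T$ replaced by $\tau$, justified by the coincidence of the reset period and the window width. The only point worth pinning down is that the absorption $\frac{\log(\tau(a-1)+1)}{\log a}+(K+1)\frac{\log(K+1)}{\log a}\le (K+2)\frac{\log(\tau+1)}{\log a}$ needs $a\le 2$ and $K\le\tau$ (both satisfied for the recommended $a=e^{1/\log\tau}$), and that the lower bound on $h_t(\tau,i)$ is obtained, as in Corollary~\ref{corollary3}, only for $t>t_0$ and then extended to the full sum by the reindexing $t-t_0\mapsto t$ rather than holding verbatim for all $t\le\tau$.
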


\begin{figure*}[!htbp] 
    \centering 
    \subfigure[]{ 
        \begin{minipage}[b]{0.45 \textwidth} 
            \centerline{	\includegraphics[width=8cm]{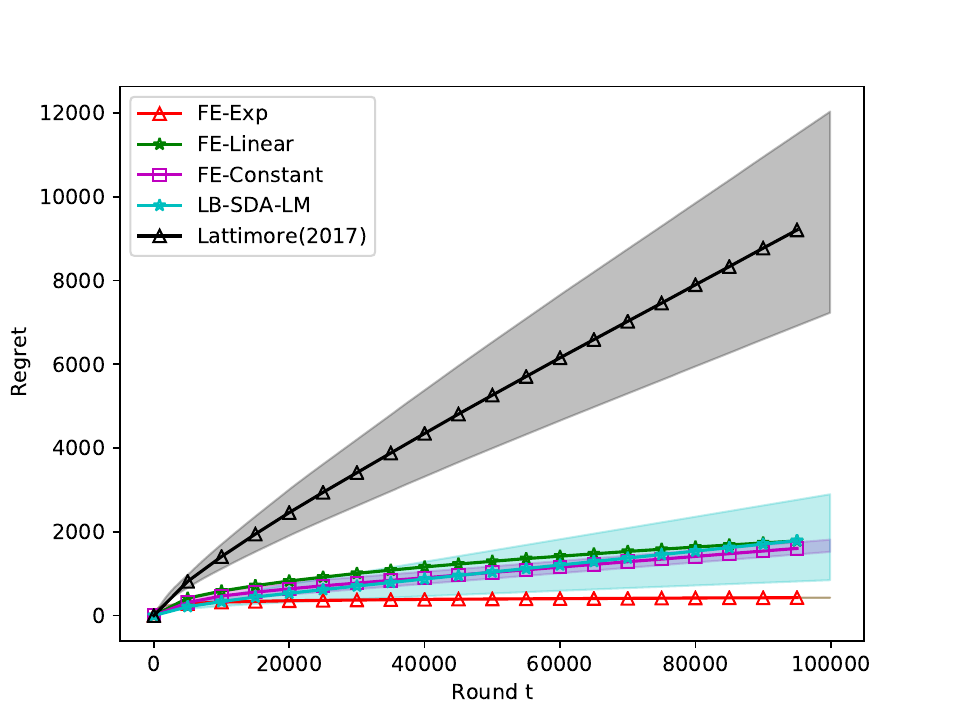}}
        \end{minipage} 
    } 
    \subfigure[]{
        \begin{minipage}[b]{0.45\textwidth}
            
            \centerline{	\includegraphics[width=8cm]{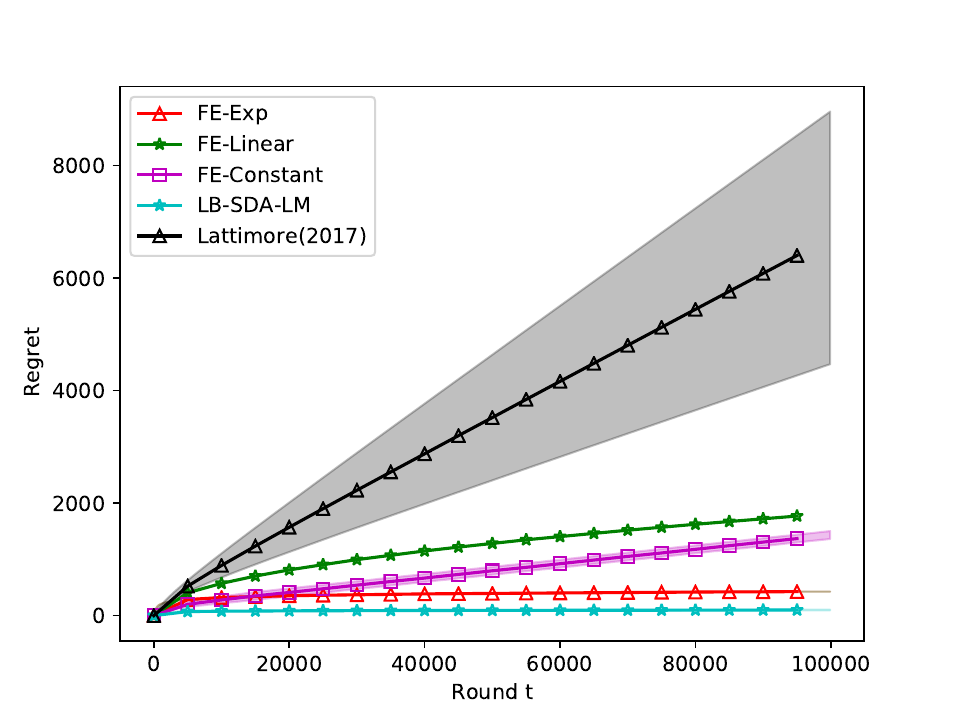} }
        \end{minipage} 	
    }
            
    \caption{Settings with $K=10,T=100000$.  Gaussian rewards (a), Bernoulli rewards (b).   } 
    \label{fig:1}
\end{figure*}
\begin{remark}
    \label{remark2}
   The sliding window methods \cite{garivier2011upper,baudry2021limited} generally suggest that the size of sliding window is $ \tau=\sqrt{T\log(T)/B_T}$. For constant and linear sequence, we get 
   \[
    \mathbb{E}[k_T(i)] = O(T^{\frac{3}{4}}\sqrt{B_T\log(T)}). 
   \]
    For  exponential sequence, we can take $a=e^{\frac{1}{\log(\tau)}}$ similar to stationary settings.  It can be observed that setting the sliding window to $\tau=\sqrt{T/B_T}\log(T) $ for exponential sequence will yield a smaller upper bound, effectively reducing it by $\sqrt{\log(T)}$.
    If the number of breakpoints is constant, we have the following asymptotic bound
\begin{equation}
    \label{ns}
     \mathbb{E}[k_T(i)] = O(\sqrt{TB_T}\log(T)).
\end{equation}

\end{remark}

\section{Experiments}

\subsection{Stationary Settings}
In this section, we compare our method with other non-parametric bandit algorithms on Gaussian and Bernoulli  distribution rewards\footnote{\url{https://github.com/qh1874/Force_Explor}}. Our method is instantiated by three different sequence: FE-Constant, FE-Linear, FE-Exp. They use constant ($f(r)=\sqrt{T}$), linear ($f(r)=r$), and exponential ($f(r)=e^{\frac{r}{\log(T)}}$) sequences, respectively. We compare the above three instances of our method with two  representative non-parametric algorithms: LB-SDA-LM  and Lattimore(2017).
Due to the potentially high time complexity of LB-SDA algorithm, we turn to comparing LB-SDA-LM, an alternative algorithm that achieves the same theoretical results but with much lower complexity. The implementation of Lattimore(2017)  seems challenging, we use a roughly equivalent and efficiently computable alternative \cite{lattimore2017scale}.

The means and variances of Gaussian distributions  are randomly generated from  uniform distribution: 
\[\mu(i) \sim U(0,1), \]\[\sigma(i) \sim U(0,1).\]
The means of Bernoulli  distribution are also generated from $U(0,1)$. 
The time horizon is set as $T=100000$. We fix  the number of arms as $K=10$. 
We measure the performance of each algorithm
with the cumulative expected regret defined in Equation \ref{regret}. The expected regret is averaged on $100$ independently runs. The 95\% confidence interval is obtained by performing $100$
independent runs and is shown as a semi-transparent region in the figure.

Figure \ref{fig:1} shows the results of Gaussian and  Bernoulli rewards. Constant and linear sequences exhibit similar performance. 
The implementation of Lattimore(2017) using the approximation method may lead to significant variance in experimental results, and its performance could be inferior to other methods. LB-SDA-LM is applicable to single-parameter exponential family distributions. This method demonstrates optimal performance on Bernoulli distributions. However, its performance is notably weaker on Gaussian distributions with unknown variance in $(0,1)$. 
Our approach, FE-EXP, although its theoretical upper bound is asymptotic and not optimal, achieves remarkable performance on Gaussian and Bernoulli rewards. 
		

\subsection{Non-stationary Settings}

\begin{figure}[!htbp] 
		
    \centering{\includegraphics[width=8cm]{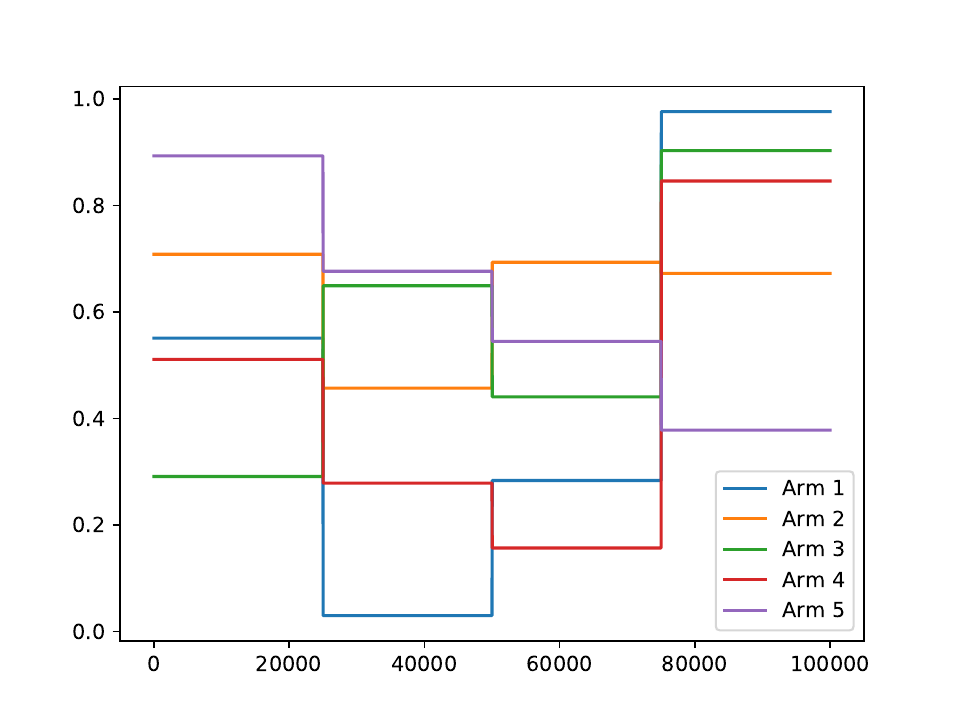}}
    \caption{ $K=5$,$B_T=5$ for Gaussian rewards.} 
    \label{fig:2}
\end{figure}

\begin{figure*}[!htbp] 
    \centering 
    \subfigure[]{ 
        \begin{minipage}[b]{0.45 \textwidth} 
            \centerline{	\includegraphics[width=8cm]{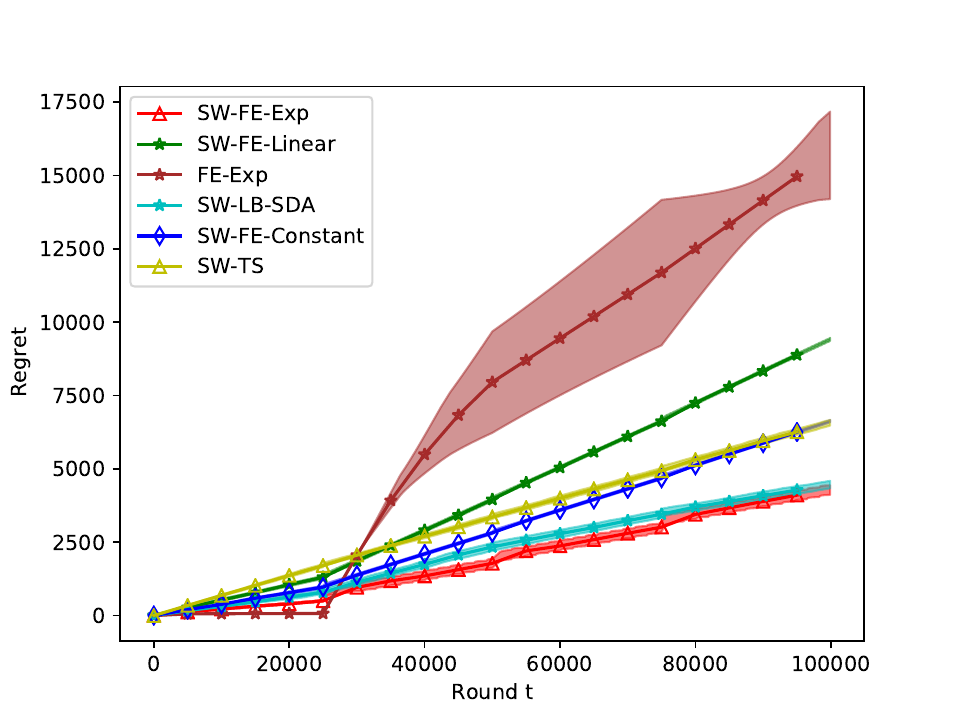}}
        \end{minipage} 
    } 
    \subfigure[]{
        \begin{minipage}[b]{0.45\textwidth}
            
            \centerline{	\includegraphics[width=8cm]{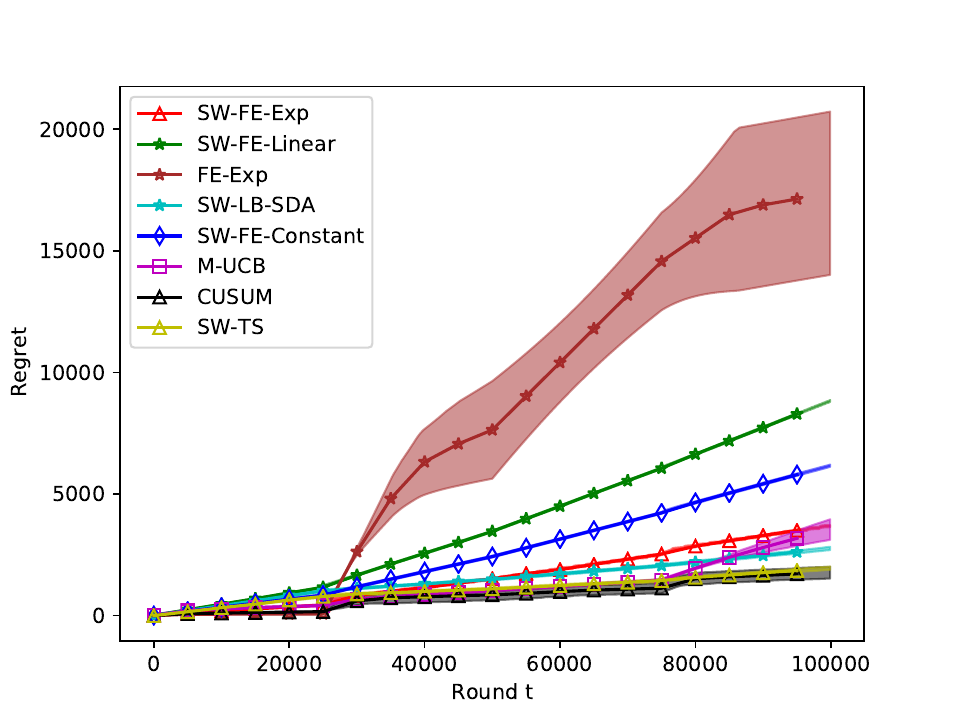} }
        \end{minipage} 	
    }
    \caption{Settings with $K=5,B_T=5,T=100000$.  Gaussian rewards (a), Bernoulli rewards (b).   } 
    \label{fig:3}
\end{figure*}
		

In this section, we compare our method with other non-stationary bandit algorithms. Specifically, our method employs four instances:  constant sequence with sliding window (SW-FE-Constant), linear sequence with sliding window (SW-FE-Linear), exponential sequence with sliding window (SW-FE-EXP), and the exponential sequence without  sliding window (FE-EXP). We use FE-EXP to evaluate the
improvement obtained thanks to the employment of the sliding window. We also compare our method with another non-parametric algorithm named SW-LB-SDA \cite{baudry2021limited}. Furthermore, we compare  with some novel and efficient algorithms such as CUSUM \cite{liu2018change}, M-UCB \cite{cao2019nearly} only in  Bernoulli distribution rewards. Moreover, we compare with  SW-TS \cite{trovo2020sliding}. This method requires information about the  Gaussian rewards  to be known in advance. There is no theoretical proof yet for SW-TS except for Bernoulli rewards. 

\noindent{\bfseries{Tune parameters }} 
Following Remark \ref{remark2}, we set $\tau=\sqrt{T/B_T}\log(T)$ for SW-FE-Exp, $\tau=\sqrt{T\log(T)/B_T}$ for SW-FE-Constant and SW-FE-Linear. 
We set $\tau=\sqrt{T\log(T)/B_T}$ for LB-SDA and SW-TS. For changepoint detection algorithm M-UCB, we set $w=800, b= \sqrt{w/2\log(2KT^2)}$ suggested by \cite{cao2019nearly}. But set the amount of exploration $\gamma = \sqrt{KB_T\log(T)/T}$.  In practice, it has been found that using this value instead of the one guaranteed in \cite{cao2019nearly} will improve empirical performance \cite{baudry2021limited}. For CUSUM, following from \cite{liu2018change}, we set $\alpha= \sqrt{B_T/T\log(T/B_T)}$ and $h=\log(T/B_T)$. For our experiment settings, we choose $M=50, \epsilon=0.05$.

The time horizon is set as $T = 100000$. We split the time horizon into $5$ phases of equal length  and fix the number of arms to $K=5$. Each stationary phase, the reward distributions will be regenerated in the same way as stationary settings.
 Figure \ref{fig:2}  depicts the expected rewards for Gaussian arms
with $K = 5$ and $B_T = 5$. Gaussian and Bernoulli distributions are generated in the same way as in the stationary setting.
The expected regret is averaged on $10$ independently runs. The 95\% confidence interval is obtained by performing $10$
independent runs and is shown as a semi-transparent region in the figure.

Figure \ref{fig:3} shows the results of Gaussian and Bernoulli rewards for piecewise-stationary settings. 
M-UCB and CUMSUM require that the rewards are bounded, which is not applicable to Gaussian rewards. We only conducted experiments on Bernoulli rewards. 
FE-EXP is an algorithm for stationary MAB problems, so it oscillates a lot at the breakpoint. SW-FE-Constant and SW-FE-Linear have similar performance, with SW-FE-Constant even performing better. This could be attributed to the significant impact of problem-dependent term  $m_T=\frac{8\sigma^2}{\Delta_T(i)^2}$ on the performance. The regret upper bound of SW-FE-Constant is controlled by $m_T^2$, while that of SW-FE-Linear is controlled by $m_T^3$.
Our algorithm, SW-FE-EXP,  exhibits competitive results on both Gaussian and Bernoulli rewards.

\section{Conclusion}

In this paper, we have developed a forced exploration algorithm for both stationary and non-stationary multi-armed bandit problems. This algorithm has broad applicability to various reward distributions, and its implementation does not require the use of reward distribution information. We employ a unified analytical approach for different input sequences and provide regret upper bounds. Experimental results demonstrate that despite the asymptotic nature of our regret upper bounds, our approach achieves comparable performance to current popular algorithms.

\nobibliography*

\bibliography{aaai24}

\appendix
\onecolumn
In this appendix, we provide the detailed proof of theorems and corollaries in the main text. We omit the proof of Corollary \ref{corollary4}, Corollary \ref{corollary5} and Corollary \ref{corollary6} as it is similar to the proof of Corollary \ref{corollary1}, Corollary \ref{corollary2}  and Corollary \ref{corollary3}. Beforehand, we give some lemmas to simplify the proof.

\begin{lemma}
    \label{lemma1}
    Assume that $X_i$ are independent, $\sigma$-subgaussian random variables. $\mu=\mathbb{E}[X_i]$. Then for any $\epsilon \geq 0$,
    \[ P(\hat{\mu} \geq \mu +\epsilon ) \leq e^{-\frac{n\epsilon^2}{2\sigma^2}}, \]
    \[ P(\hat{\mu} \leq \mu -\epsilon ) \leq e^{-\frac{n\epsilon^2}{2\sigma^2}}, \]
    where $\hat{\mu}=\frac{1}{n}\sum_{i=1}^{n}X_i$.
\end{lemma}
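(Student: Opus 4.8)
The plan is to use the standard Chernoff (moment generating function) method. Recall that $Y$ being $\sigma$-subgaussian means $\mathbb{E}[Y]=0$ and $\mathbb{E}[e^{\lambda Y}]\le e^{\lambda^2\sigma^2/2}$ for all $\lambda\in\mathbb{R}$, which is the property used implicitly elsewhere in the paper. First I would center the variables: set $Y_i = X_i-\mu$, so the $Y_i$ are independent and $\sigma$-subgaussian and $\hat\mu-\mu = \frac1n\sum_{i=1}^n Y_i$.

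Next, for any $\lambda>0$, apply Markov's inequality to $e^{\lambda\sum_i Y_i}$ and use independence to factor the expectation:
\[
P\!\left(\hat\mu-\mu\ge\epsilon\right) = P\!\left(\sum_{i=1}^n Y_i\ge n\epsilon\right) \le e^{-\lambda n\epsilon}\,\prod_{i=1}^n \mathbb{E}\!\left[e^{\lambda Y_i}\right] \le e^{-\lambda n\epsilon + n\lambda^2\sigma^2/2}.
\]
Then I would minimize the exponent $-\lambda n\epsilon + n\lambda^2\sigma^2/2$ over $\lambda>0$; the minimizer is $\lambda = \epsilon/\sigma^2$, which is interior to $(0,\infty)$ whenever $\epsilon>0$, and it yields the bound $e^{-n\epsilon^2/(2\sigma^2)}$, proving the first inequality (the case $\epsilon=0$ is trivial since any probability is at most $1$). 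For the second inequality, note that $-Y_i = \mu - X_i$ is again $\sigma$-subgaussian, since the defining MGF condition is invariant under $\lambda\mapsto-\lambda$; applying the first inequality to the variables $-X_i$, whose empirical mean is $-\hat\mu$ and whose common mean is $-\mu$, gives $P(\hat\mu\le\mu-\epsilon)\le e^{-n\epsilon^2/(2\sigma^2)}$.

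There is essentially no hard step here. The only points requiring a little care are that independence turns the MGF of the sum into a product (so the subgaussian parameters add in quadrature, equivalently $\sum_i Y_i$ is $\sqrt{n}\,\sigma$-subgaussian) and that the optimal $\lambda$ lies in the admissible range. If one prefers to avoid naming the subgaussian MGF property at all, the alternative is simply to quote Hoeffding's lemma or a standard concentration reference, but the self-contained argument above is shorter and keeps the appendix elementary.
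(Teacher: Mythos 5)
Your Chernoff-bound argument is correct and complete: centering, factoring the MGF by independence, optimizing $\lambda=\epsilon/\sigma^2$, and handling the lower tail by the symmetry of the subgaussian condition under $\lambda\mapsto-\lambda$ is exactly the standard derivation. The paper states this lemma without proof as a classical fact, so your write-up simply supplies the textbook argument it implicitly relies on; there is nothing to add or correct.
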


The following lemma widely used in the analysis of non-stationary  bandits.
\begin{lemma}(\cite{garivier2011upper,combes2014unimodal})
    \label{lemma2}
    For any $i \in \{1,...,K\}$, any integers $\tau$ and $A>0$,
    \[ \sum_{t=1}^{T}\mathds{1}\{i_t=i,N_t(\tau,i) < A\} \leq \lceil T/\tau \rceil A  \]

\end{lemma}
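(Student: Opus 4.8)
The plan is to prove the bound by a block decomposition of the time horizon combined with a simple pigeonhole observation applied to each block. First I would partition $\{1,\dots,T\}$ into $\lceil T/\tau\rceil$ consecutive blocks $B_j=\{(j-1)\tau+1,\dots,\min(j\tau,T)\}$, each of length at most $\tau$. Since the indicator sum $\sum_{t=1}^{T}\mathds{1}\{i_t=i,N_t(\tau,i)<A\}$ splits additively over these blocks, it suffices to show that within any single block at most $A$ indices $t$ satisfy both $i_t=i$ and $N_t(\tau,i)<A$; summing that per-block bound over all $\lceil T/\tau\rceil$ blocks immediately gives the claim.

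Fix a block $B_j$ and let $t_1<\cdots<t_p$ enumerate the times in $B_j$ for which $i_t=i$ and $N_t(\tau,i)<A$. The key step is the localization observation: because $B_j$ has length at most $\tau$, we have $t_p-t_1\le\tau-1$, so every $t_k$ lies in the sliding window $[t_p-\tau+1,t_p]$. Each $t_k$ is a pull of arm $i$, hence this window contains at least $p$ pulls of arm $i$, which yields $N_{t_p}(\tau,i)\ge p$. On the other hand, $t_p$ itself satisfies the defining condition, so $N_{t_p}(\tau,i)<A$. Chaining the two inequalities gives $p\le N_{t_p}(\tau,i)<A$, so the number of qualifying times in the block is strictly less than $A$, in particular at most $A$.

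Summing over the $\lceil T/\tau\rceil$ blocks then produces $\sum_{t=1}^{T}\mathds{1}\{i_t=i,N_t(\tau,i)<A\}\le\lceil T/\tau\rceil A$, as stated. The one point requiring care—and the main (though modest) obstacle—is the localization step: one must verify that all qualifying times of a block lie inside the single window ending at the last of them, which is precisely where the block length being no larger than $\tau$ is used. The boundary case of the final, possibly shorter, block is handled automatically, since a shorter block only makes $t_p-t_1$ smaller and hence preserves the containment. Note also that integrality of $A$ is not even needed for the argument; it works verbatim for any $A>0$.
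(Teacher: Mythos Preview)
Your argument is correct: the block decomposition into $\lceil T/\tau\rceil$ intervals of length at most $\tau$, together with the pigeonhole observation that in any such block the last qualifying time $t_p$ has a sliding window containing all earlier qualifying times, cleanly yields $p\le N_{t_p}(\tau,i)<A$ and hence the claimed bound. The paper does not supply its own proof of this lemma; it simply cites \cite{garivier2011upper,combes2014unimodal}, and your write-up is precisely the standard argument found in those references.
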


The following lemma helps to bound $h_t(i)$.
\begin{lemma}
    \label{lemma3}
    Define $f^{-1}(r)$ as $\min \{x:f(x)\geq r \} $. Let $t_0$ denote the number of all time steps satisfying $f(r) \leq K$, then $t_0 \leq Kf^{-1}(K+1)$. If $t >t_0$, for any arm $i \in \{1,...,K\}$ ,
    \[ h_t(i) \geq \max \{ n:\sum_{r=f^{-1}(K+1)}^{n}f(r) \leq t-t_0 \}, \] 
    \[ h_T(i) \leq \max \{ n:\sum_{r=f^{-1}(K)}^{n}f(r) \leq T-t_0 \}. \]
\end{lemma}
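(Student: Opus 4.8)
The plan is to analyze the structure of the forced-exploration schedule in Algorithm 1 round by round. First I would pin down what happens while $f(r) \le K$: in such a round $r$, the greedy rule is never triggered because the condition $\forall i: p(i) < f(r)$ fails as soon as some $p(i)$ reaches $f(r) \le K$, and in fact every arm gets forced at least once per round (since each arm's counter $p(i)$ grows by one on every step it is not pulled, so within at most $K$ steps after $p(i)$ last reset it will hit $f(r)$). Hence each such round consumes at most $K$ time steps, and there are $f^{-1}(K+1)$ rounds with $f(r)\le K$ (by definition of $f^{-1}$, the first round with $f(r)\ge K+1$ is $r = f^{-1}(K+1)$, so rounds $1,\dots,f^{-1}(K+1)-1$ have $f(r)\le K$... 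I would be careful with the off-by-one here, matching the paper's convention $f(0)=0$). This gives $t_0 \le K f^{-1}(K+1)$.

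Next, for $t > t_0$ I would bound $h_t(i)$ from below. The key observation is a per-round accounting: in round $r$ (with $r \ge f^{-1}(K+1)$, i.e. $f(r) > K$), arm $i$ is forced exactly once — it is forced when $p(i)$ reaches $f(r)$, which takes exactly $f(r)$ steps of not pulling $i$ since its last reset, but $i$ might also be pulled once by the greedy rule in that round, so the number of time steps "charged" to round $r$ on account of arm $i$ is governed by $f(r)$. The clean way: the total number of time steps elapsed up to the end of round $n$ is at least $\sum_{r \le n} f(r)$ (roughly, each round $r$ takes at least $f(r)$ steps because some arm must accumulate $f(r)$ non-pulls before any forcing happens; more carefully each round takes at least $f(r)$ steps since the round only ends when all arms have been pulled, and the last arm to be pulled waited at least... this needs the non-decreasing hypothesis). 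So after $t - t_0$ steps beyond round $f^{-1}(K+1)$, the current round $n$ satisfies $\sum_{r=f^{-1}(K+1)}^{n} f(r) \le t - t_0$, and since arm $i$ is forced at least once in each round $r \ge f^{-1}(K+1)$, we get $h_t(i) \ge n \ge \max\{n : \sum_{r=f^{-1}(K+1)}^{n} f(r) \le t - t_0\}$. The upper bound on $h_T(i)$ is the mirror image: each round $r$ takes at most $f(r) + $ (something like $K$) steps, and arm $i$ is forced at most once per round, so $h_T(i)$ cannot exceed the largest $n$ with $\sum_{r=f^{-1}(K)}^{n} f(r)$ fitting inside the budget $T - t_0$ — the index shift from $K+1$ to $K$ absorbs the slack from the extra greedy pulls and boundary rounds.

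The main obstacle I expect is making the per-round time-accounting rigorous: one must show that a round in which $f(r) = v$ consumes between roughly $v$ and $v + O(K)$ time steps, and that each arm is forced exactly once per round (for $f(r) \ge 1$). The subtlety is that greedy pulls are interleaved with forced pulls and reset the counters $p(i)$, so a naive argument could double-count or undercount; the non-decreasing assumption on $\{f(r)\}$ is what prevents an arm from being forced more than once per round and keeps the bookkeeping monotone. I would handle this by defining, for each round, the "deadline" step at which each arm would be forced if no greedy pull intervened, and arguing that greedy pulls only ever postpone a forcing (never create an extra one within the same round), so the round length is squeezed between $\max_i f(r) = f(r)$ and $f(r)$ plus the number of greedy pulls in the round, the latter bounded crudely by $O(K)$ once one notes at most one greedy pull per arm per round suffices to close the round. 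Once this structural claim is in place, both displayed inequalities follow by summing over rounds.
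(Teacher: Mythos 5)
Your proposal follows essentially the same route as the paper's (very terse) proof: no greedy pulls occur while $f(r)\le K$, so those rounds cost at most $K$ steps each and there are at most $f^{-1}(K+1)$ of them, giving $t_0\le Kf^{-1}(K+1)$; afterwards each arm is forced once per round and a round with threshold $f(r)$ consumes on the order of $f(r)$ steps, so summing $f(r)$ over rounds and inverting yields both displayed bounds. You actually supply more of the bookkeeping than the paper does, so this is fine.
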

\begin{proof}
    If $f(r)\leq K$, our method pulls each arm in turn without using the greedy rule at all.
    Before $t_0$ time steps, the number of times each arm is forced to pull is between $f^{-1}(K)$ and  $f^{-1}(K+1)$.
    Then we can get 
    \[ t_0 \leq Kf^{-1}(K+1).\]
    If $t>t_0$, observe that each arm is forced to pull every $f(r)$ time step, so we get the conclusion of this Lemma.
\end{proof}

\section*{Proof of Theorem \ref{theorem1}}
Without loss of generality, assume that the first arm is optimal, i.e., $ \mu(1)= \max_{i \in \{1,...,K\}}\mu(i)$.
The regret incurred by any suboptimal arm may be divided into two components: the greedy rule and forced exploration. We have 
\begin{equation}
    k_T(i) \leq h_T(i)+ \sum_{t=1}^{T} \mathds{1}\{\hat{\mu}_t(i) > \mu(i) + \frac{\Delta(i)}{2} \} + \sum_{t=1}^{T} \mathds{1}\{\hat{\mu}_t(1) \leq \mu(1) - \frac{\Delta(i)}{2} \}.
\end{equation}

Define $\hat{\mu}(i,s)$ to be the empirical mean based on the first $s$ samples. Since $h_t(i)$ denotes the number of times  arm $i$ is forced to pull until time steps $t$, $k_t(i) \geq h_t(i)$. Recall that
$m=\frac{8\sigma^2}{\Delta(i)^2}$, we have 
\begin{equation}
    \begin{aligned}
        \sum_{t=1}^{T} P(\hat{\mu}_t(i) > \mu(i) + \frac{\Delta(i)}{2})&= \sum_{t=1}^{T}  P(\hat{\mu}_t(i) > \mu(i) + \frac{\Delta(i)}{2},k_t(i) \geq h_t(i))\\
        &\leq \sum_{t=1}^{T}\sum_{s=h_t(i)}^{T}P(\hat{\mu}(i,s)> \mu(i)+ \frac{\Delta(i)}{2})\\
        &\leq \sum_{t=1}^{T}\sum_{s=h_t(i)}^{T} e^{-\frac{s}{m}}\\
        &\leq me^{\frac{1}{m}}\sum_{t=1}^{T}e^{-\frac{h_t(i)}{m}}
    \end{aligned}
\end{equation}
The penultimate inequality uses Lemma \ref{lemma1}. The last inequality follows from 
\[
    \sum_{s=h_t(i)}^{T} e^{-\frac{s}{m}} \leq \frac{e^{-\frac{h_t(i)}{m}}}{1-e^{-\frac{1}{m}}} \leq me^{\frac{1}{m}}e^{-\frac{h_t(i)}{m}} 
\]
Therefore,
\begin{equation}
    \label{temp2}
    \mathbb{E}[k_T(i)] \leq  h_T(i) 
+ 2m e^{\frac{1}{m}} \sum_{t=1}^{T} e^{\frac{-h_t(i)}{m}}. 
\end{equation}
So far, we have completed the proof of Theorem \ref{theorem1}. 

Let $t_1$ denotes some time steps. If we only focus on the regret after $t_1$, according to the above analysis, we have
\begin{equation}
    \label{temp3}
    \mathbb{E}[k_T(i)] \leq  h_T(i) +t_1 
+ 2m e^{\frac{1}{m}} \sum_{t=t_1+1}^{T} e^{\frac{-h_t(i)}{m}}. 
\end{equation}

\section*{Proof of Corollary \ref{corollary1}}

Following from Lemma \ref{lemma3},   if $t>t_0$ 
\[ h_t(i) \geq \max \{ n:\sum_{r=f^{-1}(K+1)}^{n}f(r) \leq t-t_0 \}, \] 
\[ h_T(i) \leq \max \{ n:\sum_{r=f^{-1}(K)}^{n}f(r) \leq T-t_0 \}. \]
Since $f(r)=\sqrt{T}$ and $K \ll T$, we can get $f^{-1}(K)=0$ and $t_0=0$. This implies 
\[
    h_t(i)=\max \{n: \sum_{r=0}^{n}f(r) \leq t \} \in [ \frac{t}{\sqrt{T}}-1,\frac{t}{\sqrt{T}}+1].
\]
Therefore, $h_T(i) \leq \sqrt{T}+1$.
\[ 
    \sum_{t=1}^{T} e^{-\frac{h_t(i)}{m}} \leq e^{\frac{1}{m}}\sum_{t=1}^{T}e^{-\frac{t}{\sqrt{T}m}} \leq  \sqrt{T}me^{\frac{1}{m}} 
\]
Substituting, we get,
\[ 
    \mathbb{E}[k_T(i)] \leq \sqrt{T}(1+2m^2e^{\frac{2}{m}})+1.
\]

\section*{Proof of Corollary \ref{corollary2}}
For a linear sequence $f(r)=r$, we have $f^{-1}(r)=r, t_0=Kf^{-1}(K)=K^2$. To bound $h_T(i)$, 
let 
\[\sum_{r=K}^{n}r=T-K^2,\]
we can get $ n^2+n=2T-K^2-K$.  Hence, 
\[ h_T(i) \leq \sqrt{2T}. \]
Let \[\sum_{r=K+1}^{n}r=t-K^2,\]
then $n^2+n=2t -K^2+K$. 
When $t > K^2-K$, $ n \geq \sqrt{t}-1$. We have, 
\[ h_t(i) \geq n \geq \sqrt{t}-1,\] 
\[ 
    \sum_{t=K^2+1}^{T} e^{-\frac{h_t(i)}{m}} \leq e^{\frac{1}{m}}\sum_{t=1}^{T}e^{-\frac{\sqrt{t}}{m}} \leq e^{\frac{1}{m}} \frac{3e^{-\frac{1}{m}}}{(1-e^{-\frac{1}{m}})^2} \leq 3e^{\frac{2}{m}}m^2.
\]
The penultimate inequality uses the fact that ($0<q<1$)
\begin{equation*}
    \begin{aligned}
        &\sum_{n=1}^{\infty}q^{\sqrt{n}}\\
        &\leq \sum_{n=1}^{\infty}q^{[\sqrt{n}]}=\sum_{k=1}^{\infty}\sum_{n=k^2}^{(k+1)^2-1}q^k\\
        &=\sum_{k=1}^{\infty}(2k+1)q^k=\frac{3q-q^2}{(1-q)^2}
    \end{aligned}
\end{equation*}
Let $t_1=K^2$  and substitute  in Equation \ref{temp3}, we have
\[ 
    \mathbb{E}[k_T(i)] \leq \sqrt{2T} +K^2 
    +6m^3e^{\frac{3}{m}}
\]

\section*{Proof of Corollary \ref{corollary3}}
For an exponential sequence $f(r)=a^r$, we have $f^{-1}(r)=\frac{\log(r)}{\log(a)},t_0 \leq K\frac{\log(K+1)}{\log(a)}$. 
From Lemma \ref{lemma3}, $h_T(i)$ can be bounded as 
\[
    h_T(i) \leq  \max\{n: \sum_{r=0}^{n}f(r) \leq T \}+ f^{-1}(K+1) \leq \frac{\log(T(a-1)+1)}{\log(a)} +\frac{\log(K+1)}{\log(a)}.
\]
To estimate $\sum_{t=t_0+1}^{T} e^{\frac{-h_t(i)}{m}}$, we need to compute a lower bound on $h_t(i)$.
Let 
\[
    \sum_{r=\log(K+1)/\log(a)}^{n}a^r = t-t_0.
\] 
After some calculation, we have 
\[ 
    h_t(i)  \geq \frac{\log(1+\frac{(t-t_0)(a-1)}{K+1})}{\log(a)}
\]
Let $q=e^{-\frac{1}{m}}$,
\[
    \begin{aligned}
    \sum_{t=t_0+1}^{T} e^{\frac{-h_t(i)}{m}}  &= \sum_{t=t_0+1}^{T} q^{h_t(i)} \\
    &\leq \sum_{t=t_0+1}^{T}q^{\frac{\log(1+\frac{(t-t_0)(a-1)}{K+1})}{\log(a)}} \\
    &=\sum_{t=t_0+1}^{T} (1+\frac{(t-t_0)(a-1)}{K+1})^{\frac{\log(q)}{\log(a)}}\\
    &\leq \sum_{t=1}^{T}(1+\frac{a-1}{K+1}t)^{\frac{\log(q)}{\log(a)}} \\
    &= \sum_{t=1}^{T} (1+\frac{a-1}{K+1}t)^{-\frac{1}{m\log(a)}}
    \end{aligned} 
\]

Substituting, we get,
\[ \mathbb{E}[k_T(i)] \leq \frac{\log(T(a-1)+1)}{\log(a)} +(K+1)\frac{\log(K+1)}{\log(a)} + 2me^{\frac{1}{m}}\sum_{t=1}^{T} (1+\frac{a-1}{K+1}t)^{-\frac{1}{m\log(a)}} \]

\section*{Proof of Theorem \ref{theorem2}}

Define $\mathcal{T}(\tau)= \{ t\leq T: \forall s \in (t-\tau,t), \mu_s(\cdot)=\mu_t(\cdot) \}.$ 
$\mathcal{T}(\tau)$ can be seen as the stationary phase, but it is shorter than the actual stationary phase since there will be $\tau$ rounds near the  breakpoints that do not belong to $\mathcal{T}(\tau)$. 

Let $A(\tau)= 2m_T\log(\tau)$. Recall that, $h_t(\tau,i)$ denotes the number of forced pulls for arm $i$ in the $\tau$ last plays. $h_T(T,i)$ denotes the number of forced pulls for arm $i$ in the whole time horizon.
We upper-bound the number of times the suboptimal arm $i$ is played as follows:
\begin{equation*}
    \begin{aligned}
    k_T(i) &= \sum_{t=1}^{T} \mathds{1}\{i_t =i \neq i_t^{*} \}\\
    &\leq  \sum_{t=1}^{T} \mathds{1}\{ i_t=i \neq i_t^{*}, N_t(\tau,i) \leq A(\tau)\} + \sum_{t=1}^{T} \mathds{1}\{ i_t=i \neq i_t^{*}, N_t(\tau,i) > A(\tau) \} +h_T(T,i) \\
    &\leq \lceil T/\tau \rceil A(\tau)+\sum_{t=1}^{T} \mathds{1}\{ i_t=i \neq i_t^{*}, N_t(\tau,i) > A(\tau) \} +h_T(T,i) \\
    &\leq  \lceil T/\tau \rceil A(\tau)+B_T\tau +  \sum_{t \in \mathcal{T}(\tau)} \mathds{1}\{ i_t=i \neq i_t^{*}, N_t(\tau,i) > A(\tau) \} +h_T(T,i)
    \end{aligned}
\end{equation*}
For $t \in \mathcal{T}(\tau)$, we have
\[ 
    \begin{aligned}
    \{ i_t=i \neq i_t^{*}, N_t(\tau,i) > A(\tau) \}  
    \subset \{ \mu_t(\tau,i) > \mu_t(i) + \frac{\Delta(i)}{2}, N_t(\tau,i)>A(\tau)\} \cup \{\mu_t(\tau,i_t^{*}) \leq \mu_t(i_t^{*}) -\frac{\Delta(i)}{2} \}
    \end{aligned} 
\]
The first part can be bounded as follows:
\[
\begin{aligned}
    &\sum_{t=1}^{T}P(\mu_t(\tau,i) > \mu_t(i) + \frac{\Delta(i)}{2}, N_t(\tau,i)>A(\tau)) \\
    &\leq \sum_{t=1}^{T}\sum_{s=A(\tau)}^{\tau}P(\mu_t(i,s)-\mu_t(i) >\frac{\Delta(i)}{2})\\
    &\leq \sum_{t=1}^{T}\sum_{s=A(\tau)}^{\tau} e^{-\frac{s}{m_T}} \\
    &\leq \frac{T}{\tau}
\end{aligned}    
\]
Recall that, $h_t(\tau,i)$ denotes the number of forced pulls for arm $i$ in the $\tau$ last plays. 
\[
\begin{aligned}
    &\sum_{t=1}^{T} P(\mu_t(\tau,i_t^{*}) \leq \mu_t(i_t^{*}) -\frac{\Delta(i)}{2}) \\
    &\leq \sum_{t=1}^{T} P(\mu_t(\tau,i_t^{*}) \leq \mu_t(i_t^{*}) -\frac{\Delta(i)}{2},N_t(\tau,i_t^{*}) \geq h_t(\tau,i)) \\
    &\leq \sum_{t=1}^{T}\sum_{s=h_t(\tau,i)}^{\tau}e^{-\frac{s}{m_T}}\\
    &\leq m_Te^{\frac{1}{m_T}}\sum_{t=1}^{T}e^{-\frac{h_t(\tau,i)}{m_T}}
\end{aligned}    
\]
Our algorithm restarts the exploration sequence $\{f(r)\}$ every $\tau $ time steps. We can  only analyze the time steps in $[1,\tau]$. Then multiply the regret between $(1,\tau)$ by $\frac{T}{\tau}$  yields the total regret. 

This rough estimation increases the regret upper bound. To get an intuition, consider the trend of $h_t(\tau,i)$ on$ (1,\tau)$ versus $(\tau,2\tau)$. On $(1,\tau)$, $h_t(\tau,i)$  is increasing from 0, while on $(\tau,2\tau)$, $h_t(\tau,i)$ is changing from $h_{\tau}(\tau,i)$. This  underestimates $h_t(\tau,i)$ and thus increases the regret upper bound.

We have 
\[ 
    \sum_{t=1}^{T} e^{-\frac{h_t(\tau,i)}{m_T}} \leq \frac{T}{\tau}
    \sum_{t=1}^{\tau} e^{-\frac{h_t(\tau,i)}{m_T}},
\]
\[h_T(T,i)= \frac{T}{\tau}\sum_{t=1}^{\tau} h_{\tau}(\tau,i).\]
Substituting, we get,
\[ 
    \mathbb{E}[k_T(i)] \leq \frac{T}{\tau}\left (h_{\tau}(\tau,i)
    + m_T e^{\frac{1}{m_T}} \sum_{t=1}^{\tau} e^{\frac{-h_t(\tau,i)}{m_T}}\right ) +\frac{T}{\tau}(1+2m_T\log(\tau))+B_T\tau.
\]

\end{document}